\documentclass{article}


\usepackage[preprint]{neurips_2025}
\usepackage{amsmath}
\usepackage{amsthm}
\usepackage{bm}
\usepackage{algorithm}
\usepackage{algpseudocode}
\usepackage{multirow}
\usepackage{graphicx}
\usepackage{subcaption}
\usepackage{wrapfig}

\usepackage[utf8]{inputenc} 
\usepackage[T1]{fontenc}    
\usepackage{hyperref}       
\usepackage{url}            
\usepackage{booktabs}       
\usepackage{amsfonts}       
\usepackage{nicefrac}       
\usepackage{microtype}      
\usepackage{xcolor}         
\usepackage{bbding}

\usepackage{amssymb}   
\usepackage{pifont}    
\usepackage{colortbl}
\usepackage{enumitem}
\setitemize{noitemsep,topsep=0pt,parsep=0pt,partopsep=0pt}


\newtheorem{theorem}{Theorem}

\newtheorem{lemma}{Lemma}

\title{SharpZO: Hybrid Sharpness-Aware Vision Language Model Prompt Tuning via Forward-Only Passes}
%
%
\usepackage{xcolor}

\definecolor{mygray}{gray}{.91}

\author{
    \textbf{Yifan Yang\textsuperscript{1}}, 
    \textbf{Zhen Zhang\textsuperscript{1}}, 
    \textbf{Rupak Vignesh Swaminathan\textsuperscript{2}}, 
    \textbf{Jing Liu\textsuperscript{2}},
    \textbf{Nathan Susanj\textsuperscript{2}}, \\
    \textbf{Zheng Zhang\textsuperscript{1}} \\
    \textsuperscript{1}University of California, Santa Barbara \\
    \textsuperscript{2}Amazon AGI \\
      \{yifanyang, zhen\_zhang\}@ucsb.edu\quad \{swarupak, jlmk, nsusanj\}@amazon.com\\ zhengzhang@ece.ucsb.edu\\
      \small Project Page: \url{https://yifan-yang.net/sharpzo.github.io/}
}

\begin{document}

\maketitle

\begin{abstract}
Fine-tuning vision language models (VLMs) has achieved remarkable performance across various downstream tasks; yet, it requires access to model gradients through  backpropagation (BP), making them unsuitable for memory-constrained, inference-only edge devices. To address this limitation, previous work has explored various BP-free fine-tuning methods. However, these approaches often rely on high-variance evolutionary strategies (ES) or zeroth-order (ZO) optimization, and often fail to achieve satisfactory performance. In this paper, we propose a hybrid Sharpness-aware Zeroth-order optimization (SharpZO) approach, specifically designed to enhance the performance of ZO VLM fine-tuning via a sharpness-aware warm-up training. SharpZO features a two-stage optimization process: a sharpness-aware ES stage that globally explores and smooths the loss landscape to construct a strong initialization, followed by a fine-grained local search via sparse ZO optimization. The entire optimization relies solely on forward passes. Detailed theoretical analysis and extensive experiments on CLIP models demonstrate that SharpZO significantly improves accuracy and convergence speed, achieving up to 7\% average gain over state-of-the-art forward-only methods.
\end{abstract}
\section{Introduction}
In recent years, fine-tuning vision-language models (VLMs) has achieved remarkable performance across a wide range of downstream tasks, including image classification \cite{zhou2022conditional, zhou2022learning}, object detection \cite{du2022learning, zhao2022exploiting}, and image segmentation \cite{xu2022simple, li2023lvit}. Among these models, one of the most prominent is CLIP \cite{radford2021learning}, which has attracted significant attention for its powerful zero-shot recognition capabilities. To further improve the performance of VLMs in downstream tasks, previous work has explored the use of efficient, trainable prompt parameters \cite{zhou2022learning, yu2023black, wang2024connecting} for the prompt tuning of VLMs. However, these prompt-tuning techniques are heavily dependent on the availability of a backward computation engine, which is typically unavailable on memory-constrained edge devices used in Internet-of-Things (IoT) applications \cite{tekin2024review} or wearable technologies \cite{covi2021adaptive}.

To address these limitations, recent studies have explored fine-tuning VLMs in backpropagation-free settings \cite{sun2022bbtv2, yu2023black, wang2024connecting}. These approaches optimize trainable prompts by leveraging high-variance black-box optimization techniques such as Evolutionary Strategies (ES) \cite{hansen2003reducing, auger2012tutorial} and Zeroth-Order (ZO) optimization \cite{oh2023blackvip, park2025zip, zhou2025quzo} as alternatives to the first-order (FO) methods used in white-box scenarios. For instance, \cite{yu2023black} employ ES to update prompt parameters by evaluating sampled prompts through forward passes only, thereby eliminating the need for memory-expensive back propagation. More recently, ZO stochastic gradient descent (SGD) \cite{ghadimi2013stochastic} methods have been adapted to VLM fine-tuning in the work of BlackVIP and ZIP \cite{oh2023blackvip, park2025zip}. By approximating gradients with just two forward evaluations, these ZO approaches avoid the high computational cost and instability of ES, yet still match its performance while requiring substantially fewer model queries \cite{park2025zip}.
 
However, existing ZO-based VLM fine-tuning methods remain substantially inferior to backpropagation‐based training. Their high variance and inherently local search dynamics make them prone to premature convergence. Previous work has attempted to improve the performance of ZO optimization by reducing the problem dimensionality through pruning \cite{guo2024zeroth, liu2024sparse, zhang2025mazo} and low-rank decomposition \cite{yang2024adazeta, park2025zip} of the trainable parameters. However, in widely adopted prompt-tuning settings, such parameter reduction offers limited benefit, as the number of trainable parameters is already inherently small in original trainable prompt.


In contrast to prior work that reduces the variance of ZO optimization by limiting the number of trainable parameters, our approach introduces a new perspective that focuses on initialization and the sharpness of the loss landscape. Specifically, we propose SharpZO, a hybrid Sharpness-Aware Zeroth-Order optimization method that employs a two-stage framework to significantly reduce the variance of ZO gradient estimation and improve the performance of ZO-based VLMs prompt tuning. 

The first stage performs warm-up training using a sharpness-aware Covariance Matrix Adaptation Evolution Strategy (CMA-ES), which provides both a smoother loss landscape and a strong initialization for the second stage. Unlike gradient-based methods that follow local descent directions, CMA-ES enables effective global exploration by adaptively shaping the search distribution based on past evaluations \cite{morse2016simple}. Moreover, incorporating sharpness not only improves model generalization but also improves the accuracy of the randomized gradient estimators used in stage 2 ZO training, which is unbiased only with respect to a smoothed version of the objective function \cite{ghadimi2013stochastic}.

In the second stage, we perform fine-grained local optimization using a sparse Zeroth-Order Stochastic Gradient Descent (ZO-SGD) method. To further reduce gradient estimation variance, we introduce a novel \textit{Z-pruning} technique specifically designed for noisy ZO settings, effectively reducing the dimensionality of the search space. Unlike conventional magnitude-based pruning used in previous sparse ZO method \cite{guo2024zeroth, liu2024sparse}, Z-pruning leverages gradient information to capture the influence of model non-linearity and applies Z-score-based normalization \cite{fei2021z} to suppress outlier gradient estimates.

\begin{figure}[t]
    \centering
    \vspace{-2pt}\includegraphics[width=0.98\linewidth]{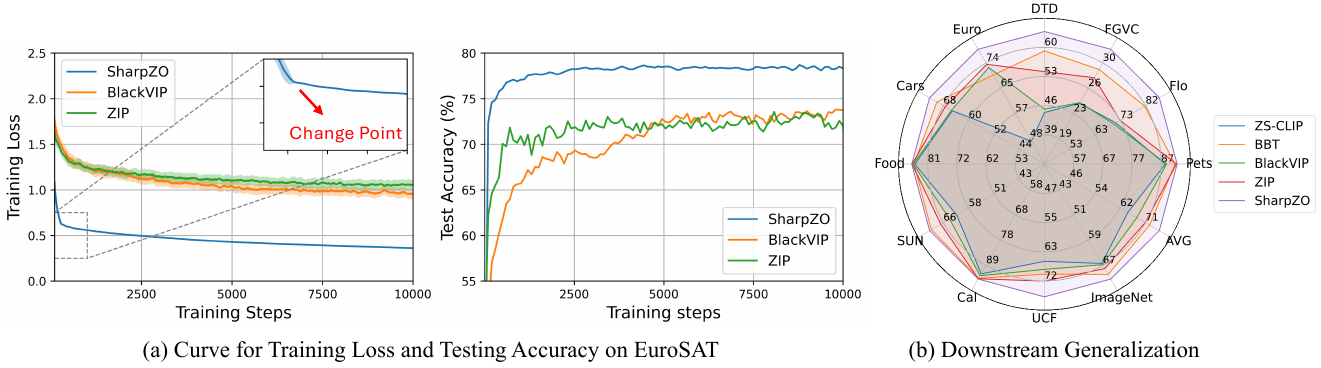}
    \caption{(a) Comparison between SharpZO and other ZO prompt-tuning baselines.SharpZO demonstrates significantly lower variance than other ZO-based baselines like ZIP \cite{park2025zip} and BlackVIP \cite{oh2023blackvip}. (b) Fine-tuned performance across all 11 tasks tested compared with ZIP and BlackVIP and BBT \cite{sun2022black}. All experiments are conducted using the CLIP model with a ViT-B/16 backbone.}
    \label{fig:first}
    \vspace{-20pt}
\end{figure}

As shown in Figure~\ref{fig:first}, our method converges faster with significantly lower variance compared with other ZO prompt-tuning baselines, achieving up to a 7\% average improvement in accuracy. Our main contributions are summarized as follows:
\begin{itemize}[noitemsep,topsep=0pt,parsep=0pt,partopsep=0pt, left=4pt]
\item We propose SharpZO, a novel hybrid sharpness-aware optimizer that fine-tunes VLMs using only forward passes. To our knowledge, this is the first ZO method that improves performance considering the sharpness-aware initialization.
\item In the first stage, we introduce a sharpness-aware CMA-ES that enhances generalization and reduces second stage ZO gradient estimation variance by smoothing the loss landscape.
\item In the second stage, we develop a sparse ZO fine-tuning method with a novel Z-pruning technique to suppress outliers in noisy gradient estimates.
\item We validate SharpZO through extensive experiments and theoretical analysis, demonstrating superior performance over existing BP-free baselines.
\end{itemize}

\section{Background}
\subsection{Coordinate-wise Gradient Estimation or Randomized Gradient Estimation?}\label{sec:vs}
Mainstream ZO gradient estimation methods can be broadly classified into two categories: Coordinate-wise Gradient Estimation (CGE) and Randomized Gradient Estimation (RGE). In our SharpZO framework, we employ CGE to compute sharpness-related terms in stage 1 and pruning metrics in stage 2, while RGE is used to update parameters during the second-stage ZO-SGD optimization. Below, we provide background on both approaches and highlight their key differences in terms of estimation variance and computational efficiency.


Given a VLM with trainable prompt vector $\bm{w} \in \mathbb{R}^d$, we define the training cross-entropy loss as $\mathcal{L}(\bm{w})$. ZO estimated gradients $\nabla_{\bm{w}} \mathcal{L}(\bm{w})$ are estimated via forward differences between function evaluations, where the perturbation of the trainable parameters $\bm{w}$ depends on whether the CGE or RGE method is used, which gives:
{\scriptsize
\begin{align}\label{eq:def_zo}
    \textbf{(RGE)} \, \hat{\nabla} \mathcal{L}(\bm{w}) = \frac{1}{q} \sum_{i=1}^{q} \left[ \frac{\mathcal{L}(\bm{w} + \mu \bm{u}_i) - \mathcal{L}(\bm{w}- \mu \bm{u}_i)}{2\mu} \bm{u}_i \right] 
 ;\,
\textbf{(CGE)} \, \hat{\nabla} \mathcal{L}(\bm{w}) = \sum_{i=1}^{d} \left[ \frac{\mathcal{L}(\bm{w} + \mu \bm{e}_i) - \mathcal{L}(\bm{w}- \mu \bm{e}_i)}{2\mu} \bm{e}_i \right].
\end{align}
}

Here, $\mu>0$ is a smooth parameter, $\bm{u}_i\in\mathbb{R}^d$ denotes a randomized perturbation vector perturbing all parameters at the same time in RGE, and $e_i = [0, 0, \cdots, 1, \cdots, 0]^T$ represents the $i$-th standard basis vector, which is used to compute a finite-difference approximation of $\mathcal{L}(\bm{w})$ along a single coordinate in CGE. To reduce the variance of the RGE gradients, it is common to average the gradients estimated over $q$ different randomized perturbations, where $q$ is called query numbers. In contrast, the CGE method approximates the gradient by perturbing individual coordinates and estimating the directional derivative along each axis using finite differences.

Unlike FO methods that compute exact gradients $\nabla\mathcal{L}(\bm{w})$ via BP, RGE methods estimate gradients in a biased manner toward the exact gradients, which instead provides an unbiased estimate of the gradient of a smoothed version of the objective, defined as $\mathcal{L}_\mu(\bm{w}) = \mathbb{E}_{\bm{u}}[\mathcal{L}(\bm{w} + \mu\bm{u})]$. In contrast, CGE estimates directional derivatives along individual coordinates without applying such smoothing, resulting in greater sensitivity to sharp changes in the loss landscape.

\textbf{Difference between RGE and CGE:} We compare RGE and CGE primarily in terms of query complexity and accuracy. The number of function queries differs significantly between the two methods. Given the dimension of trainable parameter as $d$ and the number of RGE query as $q$, RGE requires $\mathcal{O}(q)$ queries, whereas CGE incurs a higher cost of $\mathcal{O}(d)$ queries. Clearly, RGE offers much lower query complexity, especially when $q=1 \ll d$ in our case. Despite its higher query complexity, CGE achieves superior accuracy compared to RGE, as it directly approximates the true gradient without introducing smoothed objective function \cite{chen2023deepzero}. 

\subsection{Covariance Matrix Adaptation Evolution Strategy (CMA-ES)}
Similar as ZO method, CMA-ES is another type of derivative‐free optimizer for continuous, black‐box functions \citep{hansen2001completely}. CMA-ES achieves truly global search by maintaining and adapting a full covariance matrix, which captures variable interactions and shapes an anisotropic search distribution; sampling a population each generation then naturally balances broad exploration (through a larger step-size) with focused exploitation (via covariance updates). In contrast, zeroth-order methods rely on local gradient approximations at a single point and random perturbation directions.

At iteration $t$, CMA‐ES maintains parameter $\bm{\theta}_t$, $\sigma_t$ and $\bm{C}_t$, where $\bm{\theta}_t$ is the search‐distribution mean, $\sigma_t$ is the global step size, and $\bm{C}_t$ the covariance matrix. To update these parameter at each iteration, a population of $S$ candidates is drawn as
\[
\bm{w}_t^i \sim \bm{\theta}_t + \sigma_t\,\mathcal{N}(0,\bm{C}_t),
\quad i=1,\dots,S,
\]
and their fitness is evaluated via the black‐box loss $\mathcal{L}(\bm{w}_t^i)$.

After evaluating the population, the parameters are updated in three steps. First, the mean of the distribution $\bm{\theta}_t$ is shifted toward regions with lower loss $\mathcal{L}(\bm{w})$ by taking a weighted combination of the top-performing candidates $\bm{w}^i$. This moves the sampling center toward promising areas in the search space while maintaining stochasticity. Second, the covariance matrix $\bm{C}_t$ is adapted to capture both the overall spread and the correlations among the selected samples. Finally, the global step size $\sigma_t$ is adjusted through a step-size adaptation mechanism, which regulates the overall exploration scale based on the recent success of the search. For detailed derivations and algorithmic formulations, please refer to \cite{hansen2001completely}.

\subsection{Sharpness-aware Optimization}
Sharpness-Aware Minimization (SAM)~\cite{foret2021sharpnessaware} was originally proposed to improve generalization by smoothing the loss landscape and encouraging the optimization process to converge to flat minima. The key idea is to minimize the worst-case loss within a neighborhood around the current parameters $\bm{w}$ by introducing an analytic approximation of the worst-case perturbation $\bm{\epsilon}^*$ within a radius $\rho$. Specifically, the SAM objective is formulated as:
\begin{align*}
    \min_{\bm{w}}\mathcal{L}^{\text{SAM}}_{\rho}(\bm{w}), \quad \text{where} \quad \mathcal{L}^{\text{SAM}}_{\rho}(\bm{w}) = \max_{\|\bm{\epsilon}\|_2 \leq \rho} \mathcal{L}(\bm{w} + \bm{\epsilon}).
\end{align*}
Following the objective function, the gradient approximation for SAM after dropping the second-order terms is given as $\nabla_{\bm{w}}\mathcal{L}(\bm{w}')|_{\bm{w}' + \bm{\epsilon}^*}$ by computing the worst-case perturbation $\bm{\epsilon}^*$. To estimate $\bm{\epsilon}^*$, SAM approximates the inner maximization using a FO Taylor expansion of the loss function. This yields the following analytic solution:
\begin{align}\label{eq:sam_pertub}
    \bm{\epsilon}^* \approx \arg \max_{\|\bm{\epsilon}\|_2 \leq \rho} \left( \mathcal{L}(\bm{w}) + \bm{\epsilon}^\top \nabla \mathcal{L}(\bm{w}) \right) =  \rho \cdot \nabla \mathcal{L}(\bm{w})/\|\nabla \mathcal{L}(\bm{w})\|_2.
\end{align}
Unlike previous works that apply SAM to replace standard gradient descent either throughout the entire training process~\cite{foret2021sharpnessaware} or during the final few epochs~\cite{zhou2025sharpnessaware}, we, for the first time, investigate the effectiveness of incorporating sharpness information into CMA-ES as an early-stage warm-up strategy to enhance the performance of ZO fine-tuning.
\begin{figure}[t]
    \centering
    \includegraphics[width=\textwidth]{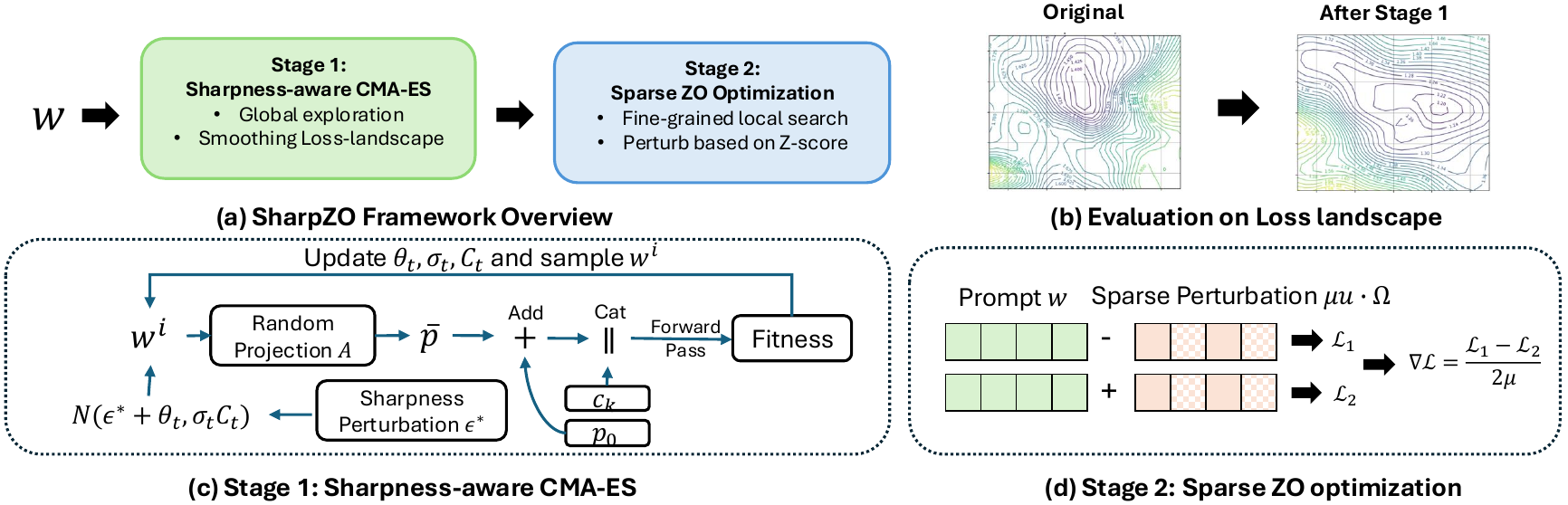}
    \caption{Overview of the SharpZO method. (a) The overall training pipeline of SharpZO, consisting of a two-stage optimization process. (b) Visualization of the smoothed loss landscape after Stage 1 sharpness-aware CMA-ES optimization. (c) Training dynamics of the sharpness-aware CMA-ES method. (d) RGE-based gradient estimation during sparse ZO training in Stage 2.}  
    \label{fig:main}
\end{figure}
\section{The SharpZO Method}
In this section, we detail the SharpZO method, which is designed to fine-tune VLMs using only forward passes. As illustrated in Fig. \ref{fig:main} (a), our approach consists of two main stages: a sharpness-aware CMA-ES stage and a sparse ZO fine-grained search stage. We demonstrate that performing a sharpness-aware global search in the early steps of training significantly enhances the performance of ZO optimization—both in terms of convergence speed and final accuracy. In the following, we first summarize the problem setup of this work and then describe each stage of our method in detail.


We consider a black-box VLM with loss function $\mathcal{L}(\bm{w})$ and a dataset $\mathcal{D} = \{(x_n, y_n)\}_{n=1}^N$ with a total of $N$ samples and $K$ classes. For models like CLIP, it is necessary to construct a text prompt for each class. Specifically, for a given class $k \in \{1, \dots, K\}$ and the model hidden dimension $m$, the class-specific prompt $\bm{p}^k$ is defined by concatenating a predefined initial text embedding $\bm{p}_0\in\mathbb{R}^m$ (e.g., ``a photo of a'') with the class label embedding $\bm{c}^k$, yielding $\bm{p}^k = [\bm{p}_0, \bm{c}^k]$. To make the prompt $\bm{p}^k$ trainable, we introduce a parameter $\bar{\bm{p}}_t$ that modifies the initial embedding $\bm{p}_0$, where $t\in[1,T]$ is the training steps. This parameter is obtained via a random projection from a low-dimensional trainable matrix $\bm{w}_t \in \mathbb{R}^{d}$, where $d\ll m$ is the latent dimension. Then, a fixed randomized projection matrix $\bm{A} \in \mathbb{R}^{m \times d}$ is used to project $\bm{w}$ into the embedding space, producing $\bar{\bm{w}}_t = \bm{w}_t\bm{A}^\top \in \mathbb{R}^{m}$ to matching the shape of the original initialized prompt $\bm{p}_0$. Thus, the overall training objective  becomes:
\begin{align*}
    \min_{\bm{w}} \frac{1}{N} \sum_{n=1}^{N} \mathcal{L}([\bm{p}_0 + \bar{\bm{p}_t}, c_k], (x_n, y_n)), \quad \text{where} \quad \bar{\bm{p}} = \bm{w}_t\bm{A}^\top
\end{align*}
Next, we provide a detailed introduction to the two stages of the SharpZO method separately.

\subsection{Stage 1: Sharpness-Aware CMA-ES Method}
 In this section, we first summarize the traditional CMA-ES method and then propose our sharpness-aware CMA-ES optimization for a warm-up training. 
 
In the traditional CMA-ES method, given the population size $S$, the optimizer generates a population of candidate solutions $\bm{w}^i_t, i\in[1, S]$, where $\bm{w}^i_t$ is obtained by sampling from the current multivariate normal distribution $\bm{w}^i_t \sim \bm{\theta}_t + \sigma_t\mathcal{N}(0, \bm{C}_t)$. Here, $\bm{\theta}_t$ is the weighted mean of the distribution, $\sigma$ is the step size, $S$ is the population size and $\bm{C}$ is the covariance matrix capturing the shape and scale of the search distribution. After evaluating the loss (fitness) $\mathcal{L}(\bm{w}^i_t, \mathcal{D})$ of these samples by forwarding the training samples $\mathcal{D}$ along with the trainable prompt $\bm{w}_t^i, i\in[1, S]$, the parameters $\bm{\theta}_t$, $\sigma_t$, and $\bm{C}_t$ are updated accordingly \cite{hansen2003reducing}. 

Different from the previous CMA-ES method, we propose a new sharpness-aware CMA-ES method to smooth the loss landscape  during the stage 1 warm-up training, which help to reduce the stage 2 gradient estimation accuracy. Specifically, we add the worst-case perturbation $\bm{\epsilon}^*$ during the sampling of CMA-ES method, where $\bm{\epsilon}^*$ is computed within a local Euclidean ball based on eq. (\ref{eq:sam_pertub}) and gives:
\begin{align}\label{eq:sampling}
    \bm{w}^i \sim \bm{\epsilon}_t^* + \bm{\theta}_t + \delta_t \mathcal{N}(0, \bm{C}_t), \quad i \in [1, S], \quad \bm{\epsilon}_t^* = \rho \frac{\nabla \mathcal{L}(\bm{\theta}_t)}{\|\nabla \mathcal{L}(\bm{\theta}_t)\|_2}.
\end{align}
The effectiveness of this modification can be explained through the Taylor expansion of the Monte Carlo estimation for the loss $\mathbb{E}[\mathcal{L}(\bm{\theta}_t+ \bm{\epsilon}^* + o)]$ used in the sharpness-aware CMA-ES optimizer, given $o \sim \mathcal{N}(0, \delta_t^2 \bm{C}_t)$. Given the sampling strategy in eq.~(\ref{eq:sampling}) and omitting the first-order term by the fact $\mathbb{E}[o]=0$, the expected fitness can be approximated as:
\begin{align*}
    \mathbb{E}[\mathcal{L}(\bm{\theta}_t+ \bm{\epsilon}^* + o)] \approx \mathcal{L}(\bm{\theta}_t+ \bm{\epsilon}^*) + \frac{1}{2}\mathbb{E}[o^\top \nabla^2\mathcal{L}(\bm{\theta}_t+ \bm{\epsilon}^*)o], \quad o\sim \mathcal{N}(0, \delta_t^2 \bm{C}_t).
\end{align*}
As observed, in addition to optimizing the same term $\mathcal{L}(\bm{\theta}_t+ \bm{\epsilon}^*)$ as gradient-based SAM methods, the effectiveness of the sharpness-aware CMA-ES approach is achieved with additional higher order term involving stochastic adaptation of the covariance matrix, which introduces an additional mechanism to explore and down-weight high-curvature directions.

Another challenge in applying sharpness-aware CMA-ES lies in gradient estimation. Specifically, we cannot directly access the gradient information needed to compute $\bm{\epsilon}^*$ due to the absence of backpropagation. Unlike prior work \cite{zhao2024helene, ye2025sharpnessaware}, which relies on RGE-based gradient estimation with a large query budget $q$, we adopt CGE for this purpose. CGE provides an unbiased estimate of the gradient along each coordinate, making it more suitable for computing the sharpness-aware perturbation term, as discussed in Section~\ref{sec:vs}. The detailed formulation for estimating $\nabla\mathcal{L}(\bm{w})$ using CGE is provided in eq. (\ref{eq:def_zo}).

\subsection{Stage 2: Fine-grained ZO Method}
After obtaining a strong initialization and a smoothed loss landscape through the first-stage global search, we proceed to optimize the parameters toward the global optimum using a ZO-SGD method. In this stage, we perform fine-grained and efficient local search guided by RGE-estimated gradients. To further reduce the variance, we propose a new Z-pruning metrics with pruning mask $\Omega$ to reduce the effective dimension during gradient estimation.

Unlike prior sparse ZO methods that rely solely on magnitude-based pruning metrics, we introduce a novel pruning criterion tailored to the high-variance nature of ZO-estimated gradients. In contrast to first-order sparse training approaches, which apply the pruning mask $\Omega$ directly to model weights, we apply the mask to the perturbation vector $\bm{u}$. This strategy effectively reduces the dimensionality during ZO perturbation and results in the following gradient estimator:
\begin{align}\label{eq:zo_update}
    \hat{\nabla}_{\bm{w}} \mathcal{L}(\bm{w}) =  \frac{\mathcal{L}(\bm{w} + \mu \cdot \Omega\bm{u}) - \mathcal{L}(\bm{w}- \mu \cdot \Omega\bm{u})}{2\mu} \bm{u}
\end{align}
We note that a query number of \( q = 1 \), as defined in Eq.~(\ref{eq:def_zo}), is sufficient for stable training of our SharpZO method when preceded by the Stage 1 warm-up. This is significantly more efficient than prior works such as \textsc{ZIP}~\cite{park2025zip} and \textsc{BlackVIP}~\cite{oh2023blackvip}, which still exhibit high training variance even with a query number of \( q = 5 \), as shown in Fig.~\ref{fig:first}. Next, we introduce how we construct the Z-pruning based pruning mask $\Omega$.

\textbf{Z-pruning Metrics:} The Z-pruning metric is designed to minimize the loss degradation introduced by pruning, by considering the sensitivity of each parameter. Specifically, considering $\delta \bm{w}$ reflect the change of weights during pruning, the difference in loss between the dense and pruned models can be approximated via a first-order Taylor expansion:
\begin{align*}
    \mathcal{L}(\bm{w} + \delta \bm{w}) - \mathcal{L}(\bm{w}) = \nabla \mathcal{L}(\bm{w})^\top \delta \bm{w} + \frac{1}{2} \delta \bm{w}^\top H \delta \bm{w} + \mathcal{O}(\|\delta \bm{w}\|^3),
\end{align*}
where $H$ denotes the Hessian of the loss with respect to the parameters. Here, we can approximate the Hessian $H$ using a Fisher matrix as $H\approx\mathbb{E}_{x\sim\mathcal{D}}[\nabla\mathcal{L}(\bm{w};x)^2]$ and estimating gradients $\nabla\mathcal{L}(\bm{w};x)$ using the CGE method described in eq.~(\ref{eq:def_zo}) as $\hat{\nabla}\mathcal{L}(\bm{w};x)$. By considering the second-order term, we can obtain the Z-pruning metrics as:
\begin{align}\label{eq:pruning_mask}
    \Omega = |\bm{w}|^2 \cdot z(\mathbb{E}_{x\sim\mathcal{D}}[\hat{\nabla}\mathcal{L}(\bm{w}, x)^2]), \quad \text{where} \, \hat{\nabla}\mathcal{L}(\bm{w}) = \sum_{i=1}^{d} \left[ \frac{\mathcal{L}(\bm{w} + \mu \bm{e}_i) - \mathcal{L}(\bm{w}- \mu \bm{e}_i)}{2\mu} \bm{e}_i \right],
\end{align}
where \( z(\bm{g}^2)=(\bm{g}^2 - \mu_g)/ \sigma_g \) denote the Z-score normalization given $\mu_g$ and $\sigma_g$ as the mean and standard deviation of the gradient vector $\bm{g}=\hat{\nabla}\mathcal{L}$. The Z-score standardizes the gradient magnitudes to mitigate the scale mismatch between trainable parameters and their gradients—a mismatch that is especially pronounced in ZO settings. We consider adapting the second-order term as the pruning score based on the practice of previous pruning work \cite{sun2024a, yang2025wanda}. In Section~\ref{sec:stage2_ab}, we demonstrate the effectiveness of the Z-pruning method compared to dense and magnitude-based pruned ZO training.

\subsection{Algorithms}
We present the SharpZO algorithm in Algorithm~\ref{alg:main}. In practice, we first execute Stage 1 for a total of \( T_c \) steps, followed by Stage 2 until the total training budget of \( T \) steps is reached. The transition point \( T_c \) is determined automatically using a strategy inspired by early stopping, based on the observed change in validation accuracy. Typically, \( T_c \ll T \) and is typically reached within 100 steps. As a result, although Stage 1 is more computationally expensive per step, the overall training efficiency remains high. During Stage 2, we update the pruning mask every \( K \) steps to balance computational cost and adaptability to the evolving optimization landscape. The learning rate for ZO optimization in Stage 2 is denoted by \( \eta \).

\begin{algorithm}[t]
\footnotesize
\caption{SharpZO: Hybrid Sharpness-Aware Zeroth-Order Optimization}
\label{alg:main}
\begin{algorithmic}[1]
\Require Initial prompt parameters $\bm{w}_0$, total steps $T$, transition step $T_c$, pruning interval $K$
\For{$t = 1$ to $T$} 
    \If{$t < T_c$} \textcolor{blue}{\Comment{Stage 1: Sharpness-aware CMA-ES}}
        \State Sample candidate solutions $\bm{w}_t^i$ using eq.~(\ref{eq:sampling})
        \State Evaluate fitness $\mathcal{L}(\bm{w}_t^i)$ for each candidate
        \State Update CMA-ES parameters $\bm{\theta}_t$, $\sigma_t$, and $\bm{C}_t$ based on fitness values
    \Else \textcolor{blue}{\Comment{Stage 2: Sparse ZO Optimization}}
        \If{$t = T_c$ or $(t - T_c) \bmod K = 0$}
            \State Update pruning mask $\Omega$ using eq.~(\ref{eq:pruning_mask})
        \EndIf
        \State Estimate gradient $\hat{\nabla} \mathcal{L}(\bm{w}_t)$ using the ZO oracle (eq.~\ref{eq:zo_update})
        \State $\bm{w}_{t+1} \leftarrow \bm{w}_t - \eta\cdot \hat{\nabla} \mathcal{L}(\bm{w}_t)  $

    \EndIf
\EndFor
\State \Return Fine-tuned prompt vector $\bm{w}_T$
\end{algorithmic}
\end{algorithm}

\section{Theoretical Guarantee}\label{sec:theory}
In this section, we give a quick proof to show the convergence rate of SharpZO method. By comparing the SharpZO convergence rate with the baseline ZO-SGD rate given in MeZO \cite{malladi2023fine}, we highlight why our hybird smoothness-aware setup can significantly help to improve the performance of VLMs fine-tuning. To align our analysis with VLMs/LLMs fine-tuning, we consider a non-convex optimization setup and the proof assume the loss landscape follows the Polyak-Lojasiewicz (PL) inequality, which has been widely considered in other ZO fine-tuning papers \cite{malladi2023fine}. First, we list the following assumptions for our analysis include the PL-inequality we just mentioned:

\textbf{A1 (PL Inequality):} The loss function $\ell$ satisfies the Polyak–Łojasiewicz (PL) condition. That is, there exists a constant $\mu > 0$ such that for all $\bm{w} \in \mathbb{R}^d$, we have $\frac{1}{2} \|\nabla \mathcal{L}(\bm{w})\|^2 \geq \mu \left( \mathcal{L}(\bm{w}) - \mathcal{L}^* \right)$, where $\mathcal{L}^*$ denotes the global minimum of the loss function.

\textbf{A2 (Lipschitz smoothness):} The loss function $\mathcal{L}$ has an $L$-Lipschitz continuous gradient. That is, there exists a constant $L > 0$ such that for all $\bm{w}_i, \bm{w}_j \in \mathbb{R}^d$, we have $\|\nabla \mathcal{L}(\bm{w}_i) - \nabla \mathcal{L}(\bm{w}_j)\| \leq L \|\bm{w}_i - \bm{w}_j\|.$
\begin{theorem}\label{the:hizo_convergence}
Under assumptions A1 and A2, suppose the SharpZO algorithm first performs $T_c$ steps of global optimization using CMA-ES and then switches to zeroth-order gradient-based optimization until convergence. The convergence rate of SharpZO method can be give by:
\begin{align}\label{eq:sharp_c}
    t \approx T_c +\mathcal{O}\left( \frac{1}{\eta \mu} \log\left( \frac{L^3 \eta^2 \rho^2}{\epsilon} \right) \right)
\end{align}
where $\epsilon$ is given by assuming $\mathcal{L}(\bm{w}_t) - \mathcal{L}^* \le \epsilon$, $\eta$ is the learning rate of ZO-SGD optimizer in stage 2 and $L$ is the smoothness factor. Eq. (\ref{eq:sharp_c}) is obtained by ignoring the lower order terms for clarity.
\end{theorem}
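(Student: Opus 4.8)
The plan is to analyze the two stages separately and then stitch the bounds together. Stage 1 contributes the additive $T_c$ term essentially by fiat: after $T_c$ steps of sharpness-aware CMA-ES we treat the resulting iterate $\bm{w}_{T_c}$ as the initialization for Stage 2, and we quantify the quality of this initialization through the SAM objective. The key observation is that the Stage 1 procedure approximately minimizes $\mathcal{L}_\rho^{\text{SAM}}$, so at the end of Stage 1 the worst-case perturbation term is controlled, i.e. $\mathcal{L}(\bm{w}_{T_c} + \bm{\epsilon}^*) - \mathcal{L}(\bm{w}_{T_c})$ is small; combined with A2 this means the iterate sits in a region whose effective smoothness / curvature along the search directions is bounded, which is what feeds the $\rho^2$ dependence in the final bound.

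For Stage 2, I would run the standard ZO-SGD descent-lemma argument. Start from A2 to get the one-step inequality $\mathbb{E}[\mathcal{L}(\bm{w}_{t+1})] \le \mathcal{L}(\bm{w}_t) - \eta \,\mathbb{E}\langle \nabla\mathcal{L}(\bm{w}_t), \hat{\nabla}\mathcal{L}(\bm{w}_t)\rangle + \tfrac{L\eta^2}{2}\mathbb{E}\|\hat{\nabla}\mathcal{L}(\bm{w}_t)\|^2$. The RGE estimator in eq.~(\ref{eq:zo_update}) is an unbiased estimator of the gradient of the smoothed loss $\mathcal{L}_\mu$, so the cross term becomes $\langle \nabla\mathcal{L}, \nabla\mathcal{L}_\mu\rangle$, which under A2 differs from $\|\nabla\mathcal{L}\|^2$ by an $\mathcal{O}(\mu^2 L^2 d)$-type term; the second moment of the estimator is bounded by a multiple of $\|\nabla\mathcal{L}\|^2$ plus a variance floor scaling with the effective (post-pruning) dimension and with $\mu^2$. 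Crucially, the perturbation magnitude here is tied to the Stage 1 radius $\rho$, so the variance/noise floor carries the $\rho^2$ factor, and collecting the $L$ powers from the descent lemma ($L$ from smoothness, another from the gradient-norm-to-suboptimality conversion, and one more from the estimator's second moment bound) yields the $L^3\eta^2\rho^2$ grouping inside the log. Then invoke A1 (PL inequality): $\|\nabla\mathcal{L}(\bm{w}_t)\|^2 \ge 2\mu(\mathcal{L}(\bm{w}_t)-\mathcal{L}^*)$ turns the recursion into $\mathbb{E}[\mathcal{L}(\bm{w}_{t+1})-\mathcal{L}^*] \le (1-c\eta\mu)\,\mathbb{E}[\mathcal{L}(\bm{w}_t)-\mathcal{L}^*] + (\text{noise floor})$, a linear-contraction recursion whose solution after $t-T_c$ steps is the geometric term plus the stationary error. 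Setting the geometric term equal to $\epsilon$ and unrolling gives $t - T_c \approx \frac{1}{c\eta\mu}\log\!\big(\frac{L^3\eta^2\rho^2}{\epsilon}\big)$, which is eq.~(\ref{eq:sharp_c}) after absorbing constants and dropping lower-order $\mu$-dependent terms.

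The main obstacle is making the Stage 1 $\to$ Stage 2 handoff rigorous: the theorem statement is somewhat informal (``$\approx$'', ``ignoring lower order terms''), but to be honest one must argue that the sharpness-aware warm-up actually places $\bm{w}_{T_c}$ in a basin where the PL constant $\mu$ applies and where the relevant curvature along perturbation directions is $\mathcal{O}(\rho^2)$-controlled rather than $\mathcal{O}(1)$. CMA-ES itself has no clean finite-time convergence guarantee, so I would not try to prove that Stage 1 reaches any particular suboptimality; instead I would phrase the result conditionally — \emph{given} that Stage 1 returns an iterate whose SAM-gap is $\mathcal{O}(\rho^2)$ (which the Taylor-expansion argument in Section~3.1 motivates), the Stage 2 analysis delivers eq.~(\ref{eq:sharp_c}). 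The secondary technical nuisance is bookkeeping the bias from using the smoothed objective $\mathcal{L}_\mu$ versus $\mathcal{L}$ and choosing $\mu$ small enough that those terms fall below $\epsilon$; this is routine and is exactly where the ``lower order terms'' are swept, but it should be stated explicitly so the comparison with the MeZO rate is apples-to-apples.
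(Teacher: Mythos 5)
Your Stage~2 skeleton (descent lemma $+$ RGE second-moment bound $+$ PL $\Rightarrow$ linear contraction with a noise floor, then unroll) matches the paper's Lemma on the ZO phase. But there is a genuine gap in where you claim the $L^3\eta^2\rho^2$ factor comes from, and it matters for whether the derivation closes. You attribute it to the Stage~2 estimator's variance floor, arguing that "the perturbation magnitude here is tied to the Stage~1 radius $\rho$." In the paper the Stage~2 perturbation scale is the smoothing parameter $\mu$, not $\rho$, and the Stage~2 noise floor is $B=C_{\rm var}(d,L)\,\eta/\mu$ with no $\rho$ dependence at all. Moreover, in a recursion $\Delta_{t+1}\le(1-c\eta\mu)\Delta_t+F$, the noise floor $F$ appears as an additive, irreducible error \emph{outside} the log after unrolling; the quantity inside the log's numerator is the \emph{initial gap} $\Delta_{T_c}$. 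So even on your own recursion, putting the noise floor inside the log and "setting the geometric term equal to $\epsilon$" does not produce eq.~(\ref{eq:sharp_c}). In the paper, $L^3\eta^2\rho^2$ comes from that initial gap: the sharpness-aware CMA-ES per-step bound (Lemma~\ref{lemma:stage_1}) carries a residual $L^2\rho^2+L^3\sigma^2\rho^2$, which after unrolling the Stage~1 PL contraction and Taylor-bridging $\Delta_{T_c}^{(1)}$ to $\Delta_{T_c}^{(2)}$ (eq.~(\ref{eq:con})) dominates the numerator $X$ in $\log\!\bigl((X-B)/(\epsilon-B)\bigr)$.

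The second, related omission is that you explicitly decline to analyze Stage~1 ("CMA-ES has no clean finite-time guarantee") and instead condition on an assumed $\mathcal{O}(\rho^2)$ SAM-gap. The paper's proof does not take that out: its central technical step is to model sharpness-aware CMA-ES as natural-gradient descent with the covariance acting as a Hessian preconditioner, split the update into a SAM-style ascent half-step and a preconditioned descent half-step, and prove a per-step descent inequality
\begin{align*}
\mathcal{L}(\theta_{t+1})\le \mathcal{L}(\theta_t)-\tfrac12(1-2L\sigma^2)\|\nabla'\mathcal{L}(\theta_t)\|^2+L^2\rho^2+L^3\sigma^2\rho^2,
\end{align*}
which under PL yields the explicit factor $(1-\mu(1-2L\sigma^2))^{T_c}\Delta_0$ and the $\rho^2$ residual that ultimately populates the theorem's log. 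Without some version of this lemma (or an explicit assumption replacing it), your argument establishes only a generic ZO-SGD rate from an unquantified initialization and cannot recover the specific $L^3\eta^2\rho^2/\epsilon$ dependence claimed in the statement.
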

\begin{proof}
    Details of the proof can be found in Appendix \ref{app:proof}.
\end{proof}
Compared to the naive ZO-SGD convergence rate presented in \cite{malladi2023fine}, the SharpZO method leverages a sharpness-aware initialization strategy that yields a lower starting point for the second-stage ZO training, specifically of the form \((1 - \mu(1 - 2L\sigma^2))^{T_c}\,\Delta_0\) in eq. (\ref{eq:x}) in Appendix \ref{app:proof}. Since we use a relative large step size $\sigma$ during the first stage training, we can observe the original error gap $\Delta_0$ is linearly decreasing with high scaling factor. Moreover, as we can observe from the sharpness term in eq. (\ref{eq:sharp_c}), the integration of sharpness-aware optimization effectively clips the L-smoothness constant \(L\) with the sharpness parameter \(\rho\), thereby reducing the effective sharpness in the ZO training phase.

\section{Experiments}
In this section, we present experimental results to evaluate the performance of the proposed SharpZO method across a variety of downstream tasks using CLIP models with different architectures. Specifically, we compare the proposed SharpZO method with zero-shot (ZS) inference and other BP-free baselines like BBT \cite{yu2023black}, BlackVIP \cite{oh2023blackvip}, and ZIP \cite{park2025zip} (Detailed descriptions for tasks and baslines method can be found in Appendix \ref{app:imp_baselines}). Our results demonstrate that SharpZO not only achieves superior accuracy but also improves efficiency, as measured by the time-to-test-accuracy (ToTA) metric \cite{coleman2019analysis}. Additionally, we provide a comprehensive ablation study to analyze the contributions of individual components in Section \ref{sec:stage1_ab} and Section \ref{sec:stage2_ab}. Further implementation details and extended experimental results—including evaluations across various model architectures and hyperparameter choices, as well as comparisons with state-of-the-art prompt-tuning methods that involve backpropagation, such as CraFT~\cite{wang2024connecting} are provided in Appendix~\ref{app:exp}.

\textbf{Training Detail:} For the VLM model, we utilize CLIP \cite{radford2021learning} with both ResNet \cite{he2016deep} and ViT \cite{dosovitskiy2020image} backbones as the visual encoder, and Transformers \cite{vaswani2017attention} as the text encoder. The CLIP weights are initialized from the official pretrained checkpoints and remain frozen during training. The prompt generator use initial prompt with length of $4$, and hidden dimension $d = 512$. Parameters in $\bm{w}$ are initialized from a Gaussian distribution $\mathcal{N}(0, 0.02)$. 

Here, we manually tuned the change points from stage 1 to stage 2 by choosing from a set of parameter between $100$ to $500$. However, we also tried to adapt an early stopping criterion to automatically decide this point: the algorithm switches to stage 2 if the validation accuracy does not improve by more than 0.01 over the best recorded accuracy for 10 consecutive steps, which can achieve similar results. Detail hyper-parameter setup for SharpZO method on various tasks can be found in Table. \ref{tab:param} in Appendix \ref{app:param}. All experiments use a 16-shot setup unless otherwise specified.

\begin{table}[!t]
\caption{Few-shot performance across 11 datasets using CLIP models with ResNet and ViT backbones, trained for 20K steps. * indicate results reported in prior works \cite{oh2023blackvip, wang2024connecting, park2025zip}. We additionally reproduce the ZIP results, as the original paper restricted the query budget to 5K. Bold values highlight the best performance, demonstrating the superiority of SharpZO over all BP-free baselines.}
\label{tab:cls_all}
\centering
\vspace{2pt}
\resizebox{\textwidth}{!}{%
\begin{tabular}{cccccccccccccc}
\hline
\textbf{Backbone}          & \textbf{Methods}                & \textbf{ImageNet}                      & \textbf{Pets}                 & \textbf{Flo}                           & \textbf{FGVC}                          & \textbf{DTD}                           & \textbf{Euro}                          & \textbf{Cars}                          & \textbf{Food}                          & \textbf{SUN}                           & \textbf{Cal}                           & \textbf{UCF}                           & \textbf{AVG}                           \\ \hline
                           & ZS-CLIP*                        & 58.18                                  & 85.77                         & 66.14                                  & 17.28                                  & 42.32                                  & 37.56                                  & 55.61                                  & 77.31                                  & 58.52                                  & 86.29                                  & 61.46                                  & 58.77                                  \\
                           & BBT*                            & 61.74                                  & 88.73                         & 72.53                                  & 12.07                                  & 54.33                                  & 69.01                                  & 60.24                                  & 78.44                                  & 64.34                                  & 90.05                                  & 67.91                                  & 65.40                                  \\
                           & BlackVIP                        & 60.33                                  & 85.99                         & 65.12                                  & 17.37                                  & 42.73                                  & 58.16                                  & 56.70                                  & 77.23                                  & 59.17                                  & 86.37                                  & 60.11                                  & 60.84                                  \\
                           & ZIP                             & 61.30                                   & \textbf{89.53}                & 68.41                                  & 19.98                                  & 47.40                                  & 63.10                                  & 58.61                                  & 78.98                                  & 62.86                                  & 90.63                                  & 64.05                                  & 64.08                                  \\
\multirow{-5}{*}{RN50}     & \cellcolor[HTML]{EFEFEF}SharpZO & \cellcolor[HTML]{EFEFEF}\textbf{63.29} & \cellcolor[HTML]{EFEFEF}89.51 & \cellcolor[HTML]{EFEFEF}\textbf{79.50} & \cellcolor[HTML]{EFEFEF}\textbf{23.97} & \cellcolor[HTML]{EFEFEF}\textbf{60.58} & \cellcolor[HTML]{EFEFEF}\textbf{80.77} & \cellcolor[HTML]{EFEFEF}\textbf{60.58} & \cellcolor[HTML]{EFEFEF}\textbf{79.28} & \cellcolor[HTML]{EFEFEF}\textbf{66.17} & \cellcolor[HTML]{EFEFEF}\textbf{91.24} & \cellcolor[HTML]{EFEFEF}\textbf{72.43} & \cellcolor[HTML]{EFEFEF}\textbf{69.76} \\ \hline
                           & ZS-CLIP*                        & 66.73                                  & 89.21                         & 71.34                                  & 24.72                                  & 44.39                                  & 47.60                                  & 65.32                                  & 86.06                                  & 62.50                                  & 92.94                                  & 66.75                                  & 65.23                                  \\
                           & BBT*                            & 70.15                                  & 92.70                         & 82.41                                  & 29.49                                  & 59.26                                  & 70.48                                  & 70.19                                  & 86.42                                  & 70.33                                  & 94.75                                  & 70.48                                  & 72.42                                  \\
                           & BlackVIP*                       & 67.10                                  & 89.70                         & 70.60                                  & 24.78                                  & 45.20                                  & 73.10                                  & 65.60                                  & 86.60                                  & 64.70                                  & 93.70                                  & 69.10                                  & 68.20                                  \\
                           & ZIP (Offical)*                  & 66.20                                  & 94.00                & 70.40                         & 26.80                                  & 47.80                                  & 64.60                                  & 71.09                                  & 86.40                                  & 63.30                                  & 94.00                                  & 69.80                                  & 70.57                                  \\
                           & ZIP (Rep)                       & 68.35                                  & 93.18                         & 73.00                                  & 28.32                                  & 54.26                                  & 74.19                                  & 67.58                                  & 87.01                                  & 67.43                                  & 94.97                                  & 72.51                                  & 70.98                                  \\
\multirow{-6}{*}{ViT-B/16} & \cellcolor[HTML]{EFEFEF}SharpZO & \cellcolor[HTML]{EFEFEF}\textbf{71.60} & \cellcolor[HTML]{EFEFEF}\textbf{94.06} & \cellcolor[HTML]{EFEFEF}\textbf{88.02}         & \cellcolor[HTML]{EFEFEF}\textbf{32.34} & \cellcolor[HTML]{EFEFEF}\textbf{63.95} & \cellcolor[HTML]{EFEFEF}\textbf{79.42} & \cellcolor[HTML]{EFEFEF}\textbf{72.50} & \cellcolor[HTML]{EFEFEF}\textbf{87.13} & \cellcolor[HTML]{EFEFEF}\textbf{70.86} & \cellcolor[HTML]{EFEFEF}\textbf{95.09} & \cellcolor[HTML]{EFEFEF}\textbf{77.08} & \cellcolor[HTML]{EFEFEF}\textbf{75.64} \\ \hline
\end{tabular}
}
\end{table}
\begin{table}[!t]
\centering
\caption{Comparison of robustness to distribution shift between SharpZO and other baselines. The best results among BP-free methods are highlighted in bold.}
\label{tab:robust}
\vspace{2pt}
\resizebox{1\textwidth}{!}{%
\begin{tabular}{ccccccccccccc}
\hline
                         & \multicolumn{6}{c}{ResNet-50}                                                       & \multicolumn{6}{c}{ViT-B/16}                                                        \\ \cline{2-13} 
\multirow{-2}{*}{Method} & ImageNet & -V2    & -Sketch & -A    & -R    & Avg   & ImageNet & -V2    & -Sketch & -A    & -R     & Avg   \\ \hline
ZS-CLIP                  & 58.2     & 51.3   & 33.3    & 21.7  & 56.0  & 40.6  & 66.7     & 60.8   & 46.2    & 47.8  & 74.0   & 57.2  \\
CoOp                     & 63.3    & 55.4  & 34.7   & 23.1 & 56.6 & 42.4 & 71.7    & 64.6  & 47.9   & 49.9 & 75.1  & 59.4 \\ \hline
BBT                      & 61.7     & 54.0   & 33.9    & 23.2  & 58.3  & 42.4  & 70.2     & 63.0   & \textbf{47.9} & 49.5  & 76.1   & 59.1  \\
BlackVIP                 & 60.2     & 52.3   & 33.3    & 21.5  & 57.7  & 41.2  & 65.5     & 59.2   & 44.6    & 42.5  & 73.1   & 54.9  \\
ZIP                      & 61.3     & 53.7   & 33.7    & 23.9  & 57.6  & 42.2  & 68.4     & 59.7   & 45.5    & 47.1  & 75.2   & 56.9  \\
\rowcolor[HTML]{EFEFEF} 
SharpZO                  & \textbf{63.3} & \textbf{54.8} & \textbf{35.2} & \textbf{24.5} & \textbf{58.7} & \textbf{43.3} & \textbf{71.6} & \textbf{63.8} & 45.0    & \textbf{50.3} & \textbf{76.6} & \textbf{58.9} \\ \hline
\end{tabular}
}
\vspace{-10pt}
\end{table}
\subsection{Results on Few-Shot Classification}
We first compare our SharpZO method with SOTA BP-free prompt-tuning baselines across 11 downstream tasks. To explore the effect of different model architectures, we evaluate all methods using CLIP models with both ResNet-50 and ViT-B/16 viusal encoder backbones. The results are summarized in Table~\ref{tab:cls_all}. Based on these results, we draw the following conclusions:

\textbf{SharpZO significantly outperforms all other BP-free methods.} As shown in Table~\ref{tab:cls_all}, SharpZO consistently surpasses other ZO prompt tuning approaches in terms of classification accuracy. Compared to the SOTA ZO prompt tuning method ZIP, SharpZO achieves an absolute average performance gain of \textbf{5\%} and outperforms ZIP among \textbf{all} 11 tasks on the CLIP model with ViT-B/16 backbone. The performance of SharpZO is approaching first-order method like CoOp, which shows the potential of deploying ZO method in real-world application. These improvements are driven by the reduction of gradient estimation variance and bias with the sharpness-aware warm-up training. 

\textbf{SharpZO performs robustly across diverse model architectures.} Unlike prior ZO prompt-tuning methods such as ZIP and BlackVIP—which often struggle to converge on certain tasks like Flowers102, EuroSAT, and UCF101 when using CLIP models with ResNet backbones—our proposed SharpZO method consistently delivers strong performance across a wide range of architectures and tasks. Additional results using architectures such as ResNet-101 and ViT-B/32, presented in Appendix~\ref{app:exp_arch}, further demonstrate the robustness of SharpZO to varying model backbones.

\textbf{SharpZO exhibits lower training variance.} As illustrated by the optimization curves in Fig.~\ref{fig:first}(a), SharpZO achieves markedly more stable training—its standard-deviation bands are substantially narrower than those of other ZO methods such as ZIP and BlackVIP.


\subsection{Robustness to Distribution Shift}
In this section, we further evaluate the robustness of the SharpZO method under distribution shifts. Results comparing SharpZO to other BP-free baselines are summarized in Table~\ref{tab:robust}. Compared to the state-of-the-art ZO method ZIP, SharpZO achieves an absolute improvement of \textbf{2.0\%} on ResNet-50 and \textbf{2.8\%} on ViT-B/16 (averaged over all distribution shift benchmarks) for the ImageNet test accuracy. These findings highlight the strong out-of-distribution generalization ability of SharpZO under varying types of distribution shifts.


\subsection{Time-to-test-accuracy Efficiency}
\begin{wraptable}{r}{0.46\textwidth}
\vspace{-15pt}
\centering
\footnotesize
\begin{tabular}{ccccc}
\hline
\textbf{Methods} & \textbf{IN} & \textbf{Pets} & \textbf{DTD} & \textbf{Euro} \\ \hline
BlackVIP        & 172.6             & 714.8         & 132.0        & 201.5         \\
ZIP             & 19.0              & 126.3         & 6.0          & 251.4         \\
\rowcolor[HTML]{EFEFEF} 
SharpZO         & \textbf{15.3}     & \textbf{2.4}  & \textbf{2.6} & \textbf{12.7} \\ \hline
\end{tabular}
\caption{Time-to-test accuracy comparison between different BP-free prompt tuning methods on multiple dataset. The time is recorded in minutes.}
\label{tab:eff}
\end{wraptable}
In this section, we evaluate the training efficiency of the proposed SharpZO method. We focus on training time rather than memory usage, as all zeroth-order (ZO) baselines exhibit comparable memory consumption due to their forward-only nature. Specifically, we measure the wall-clock time required to reach a common evaluation accuracy threshold, following the protocol of \cite{coleman2019analysis}. The threshold is selected such that it is attainable by all baselines. The results are summarized in Table~\ref{tab:eff} and tested on single Nvidia A100-40G GPU.

As shown in Table~\ref{tab:eff}, the SharpZO method achieves faster convergence compared to other ZO prompt tuning baselines, which is consistent with our theoretical analysis in Section~\ref{sec:theory}. Beyond the benefits of the proposed hybrid sharpness-aware optimization scheme, the improved training speed of SharpZO also stems from its lower per-step query count and significantly faster forward pass.

Specifically, unlike ZIP and BlackVIP, which require 10 queries per step to reduce training loss, SharpZO only requires 2 queries per step during Stage 2. Moreover, ZIP incurs substantial overhead due to its complex reconstruction process in forward pass, taking approximately 0.53 seconds per forward pass, whereas SharpZO requires only 0.0069 seconds. Consequently, the average per-step training time of SharpZO is markedly lower than other ZO baseline like ZIP.

\begin{figure}[t]
  \begin{minipage}[t]{0.48\textwidth}
    \centering
     \caption{Comparison between the naive CMA-ES with the Sharpness-aware (S-aware) CMA-ES method on EuroSAT dataset.}
    \includegraphics[width=\linewidth]{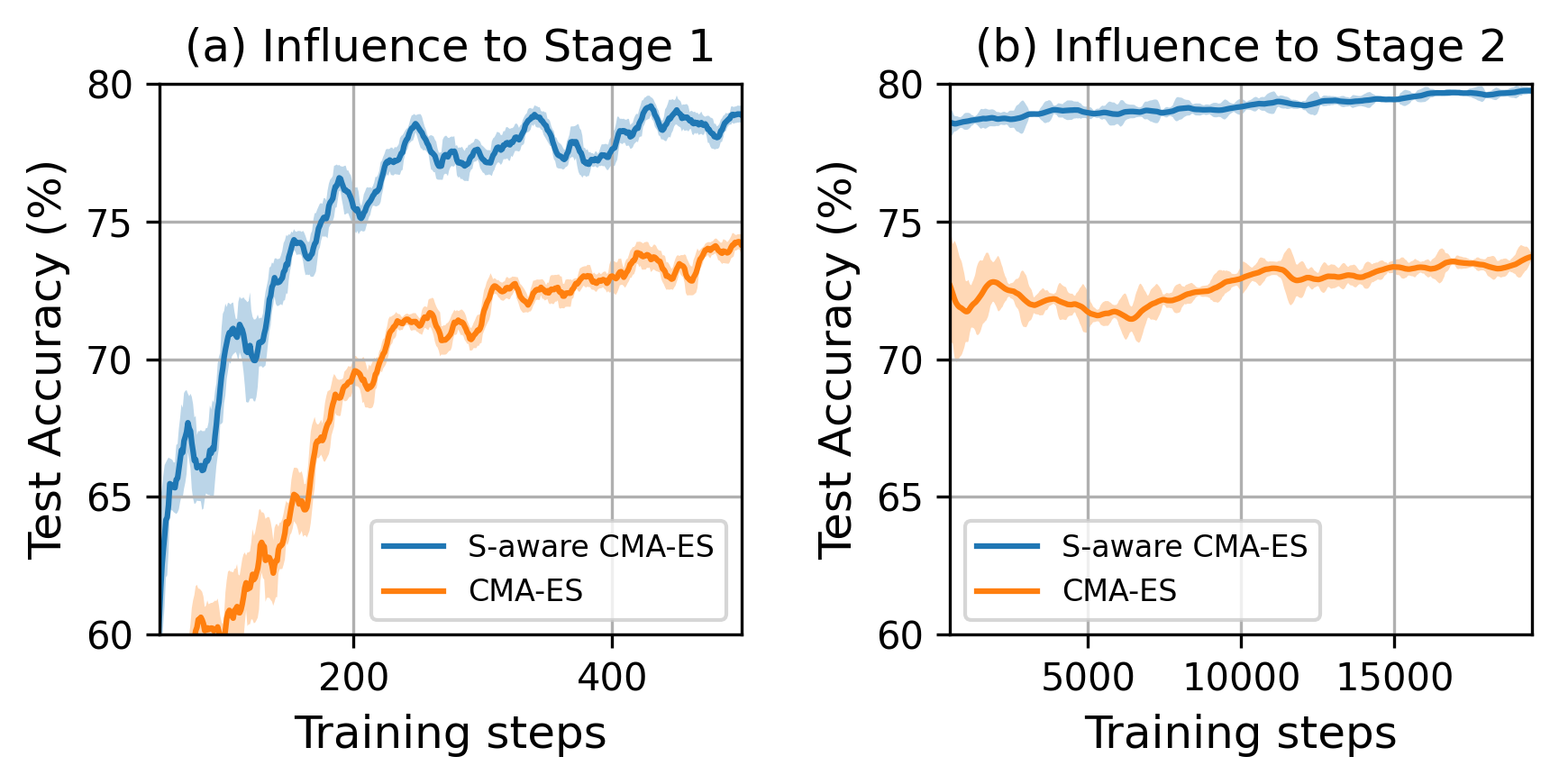}
    \label{fig:sharpness}
  \end{minipage}
  \hfill
  \begin{minipage}[t]{0.48\textwidth}
    \centering
        \caption{Comparison between the naive ZO optimization and sparse ZO optimization with various pruning metrics on EuroSAT dataset.}
    \includegraphics[width=0.8\linewidth]{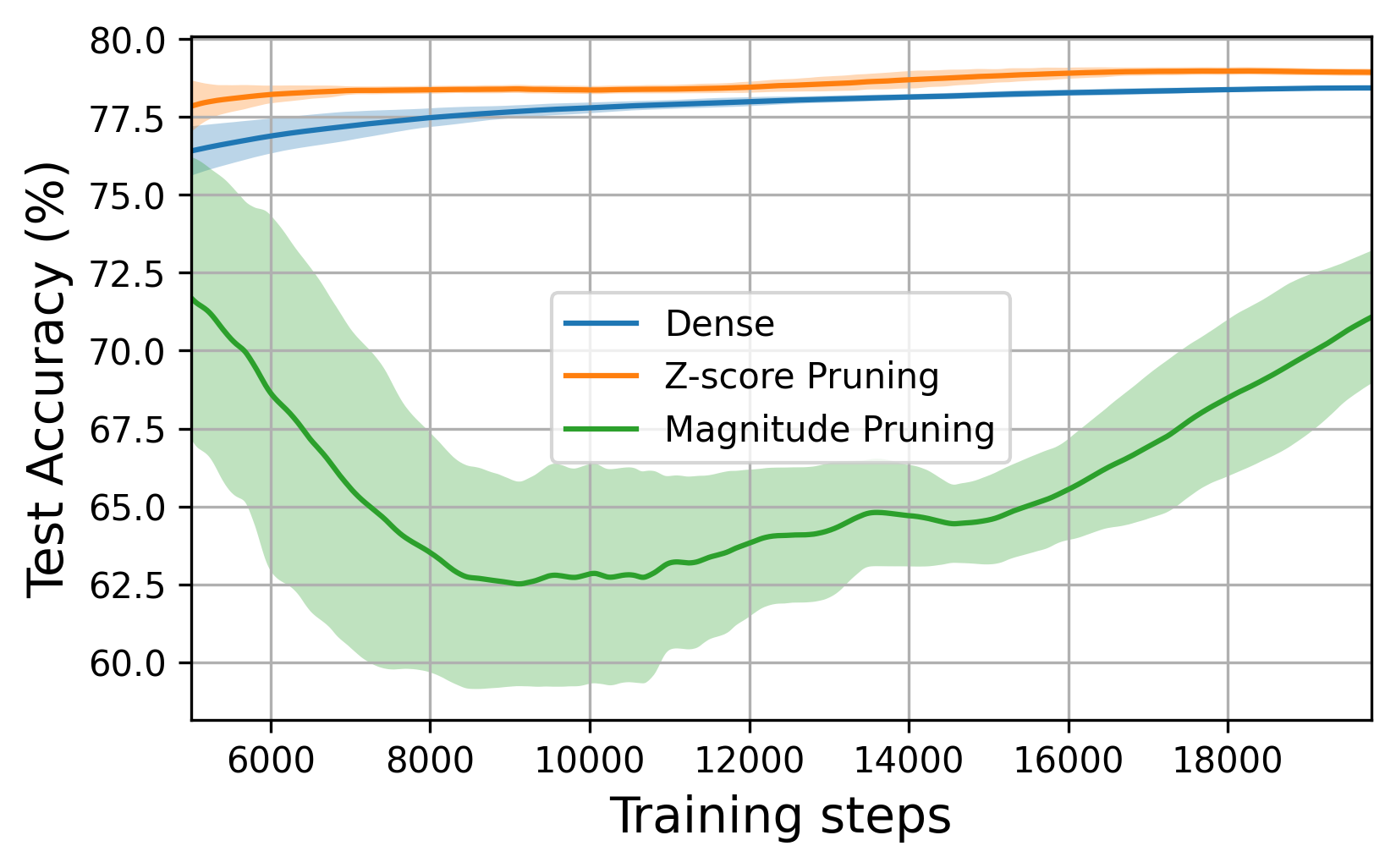}
    \label{fig:sparse}
  \end{minipage}
  \vspace{-20pt}
\end{figure}
\subsection{Ablation on Components in Different Stages}
\subsubsection{Influence of Stage 1 Sharpness Aware Optimization}\label{sec:stage1_ab}
In this section, we aim to validate the effectiveness and illustrate the influence of our sharpness-aware CMA-ES method to both stage 1 and stage 2 training in SharpZO. Specifically, we compare the training curve between the naive CMA-ES method and our sharpness-aware (S-aware) CMA-ES method for both stage in Fig.~\ref{fig:sharpness} (a) and Fig.~\ref{fig:sharpness} (b), respectively. The experiments are conducted on the EuroSAT dataset using the CLIP model with a ViT-B/16 backbone. For the convenience of comparison, the transition point from Stage 1 to Stage 2 optimization is fixed at 500 steps.

As illustrated in Fig.~\ref{fig:sharpness} (a), the sharpness-aware CMA-ES method consistently achieves a faster convergence rate and superior final accuracy compared to the naive CMA-ES method in stage 1, which shows a better generalization ability. More importantly, the sharpness-aware training benefits the second-stage convergence of ZO optimization as observed from Fig.~\ref{fig:sharpness}. The sharpness-aware warm-up training leads to a more stable Stage 2 training curve with reduced variance, which can be attributed to the implicitly clipped smoothness factor introduced by the sharpness-aware updates, as discussed in the theoretical analysis in Section~\ref{sec:theory}.

\subsubsection{Effectiveness of Stage 2 Sparse ZO Optimization}\label{sec:stage2_ab}
We evaluate the effectiveness of our proposed Z-pruning strategy for sparse ZO optimization in Stage 2. Specifically, we compare it against magnitude-based pruning and dense training on the EuroSAT dataset using the CLIP ViT-B/16 backbone. Results are shown in Fig.~\ref{fig:sparse}. 


Our findings show that sparse ZO training with Z-pruning reduces gradient variance and improves both accuracy and convergence speed compared to dense training. In contrast, magnitude-based pruning—commonly used in prior work \cite{guo2024zeroth, liu2024sparse}—performs poorly in prompt-tuning due to the limited number of trainable parameters (512), which makes accurate pruning critical. Moreover, magnitude-based pruning operates solely on weight values, ignoring critical nonlinear interactions within the model. This limitation is particularly impactful in prompt-based tuning, where the prompts are prepended to the input and play a disproportionately large role in the model’s behavior compared to standard weights.
\subsubsection{Comparison with Method Involving Backpropagation}\label{app:craft}
In this section, we compare our method with black-box tuning baselines that involve backpropagation, such as CraFT \cite{wang2024connecting}. The CraFT method introduces a collaborative fine-tuning framework that jointly optimizes both the prompt and the adapter, using ES for the former and first-order (FO) methods for the latter. Although CraFT achieves strong performance, its requirement of backpropagation limits its applicability in memory-constrained environments, such as mobile devices and edge devices, where gradient access via backpropagation is not available.

It is important to highlight that reliance on backpropagation presents significant challenges for deployment on edge inference devices, as these devices are typically equipped with inference-only ASICs and do not support gradient computation. Unlike large-scale multimodal models like LLaVA \cite{liu2023visual}, the CLIP model is particularly relevant for edge computing scenarios. Despite the inherent limitations of backpropagation-based methods in such contexts, we include CraFT in our comparison to demonstrate that our proposed SharpZO method can achieve even better performance without relying on backpropagation, and in a more efficient manner.

The comparison results on few-shot task accuracy and training memory, are summarized in Table~\ref{tab:craft}. As shown, SharpZO consistently outperforms CraFT across all evaluated tasks. Moreover, SharpZO also requires less training memory, as it avoids storing backpropagation graphs, further enhancing its suitability for edge deployment.

\begin{table}[h]
\caption{Comparison between SharpZO and black-box fine-tuning baseline involving backpropagation. * represent accuracy results obtained from original CraFT paper \cite{wang2024connecting}. We bold the best results during the compasion.}
\centering
\small
\label{tab:craft}
\begin{tabular}{ccccc|cccc}
\hline
\multirow{2}{*}{\textbf{Methods}} & \multicolumn{4}{c|}{\textbf{Test Accuracy}}                       & \multicolumn{4}{c}{\textbf{Memory (MB)}}                              \\ \cline{2-9} 
                                  & Imagenet       & Pets           & DTD            & Euro           & Imagenet        & Pets            & DTD             & Euro            \\ \hline
CraFT*                            & 68.21          & 91.94          & 63.28          & 72.07          & 3297.5          & 3130.9          & 3128.7          & 2178.8          \\
\textbf{SharpZO}                  & \textbf{71.60} & \textbf{93.46} & \textbf{63.95} & \textbf{79.42} & \textbf{3132.3} & \textbf{3032.5} & \textbf{3057.9} & \textbf{2075.2} \\ \hline
\end{tabular}
\end{table}

\section{Conclusion}
This paper has introduced SharpZO, a hybrid ZO fine-tuning method comprising two optimization stages. In Stage 1, SharpZO employs a sharpness-aware CMA-ES algorithm to conduct a global search for optimal regions while simultaneously smoothing the loss landscape. In Stage 2, SharpZO performs fine-grained sparse ZO optimization for local optimization. Compared with prior BP-free fine-tuning approaches, SharpZO provides a high-performance, inference-only fine-tuning solution tailored for VLMs. Future work may explore the extension of SharpZO to full-model fine-tuning for both text- and multimodal LLMs.

\bibliographystyle{plain}
\bibliography{refs}

\newpage
\appendix
\section{Limitations}
While SharpZO demonstrates strong empirical and theoretical advantages for forward-only VLM fine-tuning, several limitations remain. First, the method is currently tailored for prompt-tuning scenarios with relatively low-dimensional parameter spaces; its scalability to full-model or multi-modal fine-tuning remains unexplored. Second, the sharpness-aware CMA-ES warm-up stage requires coordinate-wise gradient estimation (CGE), which may be computationally expensive for higher-dimensional settings. As a result, the SharpZO method proposed in this work is a better fit under the parameter-efficient fine-tuning setup.
\section{Further Detail Regarding Tasks and Baselines}\label{app:imp_baselines}
\textbf{Datasets:} Following the experimental setup of prior VLMs fine-tuning works \cite{wang2024connecting, park2025zip}, we evaluate SharpZO on 11 diverse image classification benchmarks under a few-shot learning scenario. These datasets cover a broad range of tasks: generic object recognition with ImageNet \cite{deng2009imagenet} and Caltech101 \cite{wang2019learning}, fine-grained image classification with OxfordPets \cite{parkhi2012cats}, StanfordCars \cite{krause20133d}, Flowers102 \cite{nilsback2008automated}, Food101 \cite{bossard2014food}, and FGVCAircraft \cite{maji2013fine}, satellite image classification with EuroSAT \cite{helber2019eurosat}, texture recognition with DTD \cite{cimpoi2014describing}, scene classification with SUN397 \cite{xiao2010sun}, and action recognition with UCF101 \cite{soomro2012dataset}. To assess the robustness of SharpZO under distribution shift, we further evaluate it on four widely-used out-of-distribution (OOD) variants of ImageNet: ImageNetV2 \cite{recht2019imagenet}, ImageNet-Sketch \cite{wang2019learning}, ImageNet-A \cite{hendrycks2021natural}, and ImageNet-R \cite{hendrycks2021many}.

\textbf{Baselines:} To benchmark the performance of SharpZO against SOTA methods, we mianly consider five baseline approaches:
\begin{itemize}[noitemsep,topsep=0pt,parsep=0pt,partopsep=0pt, left=0pt]
  \item \textbf{Zero-shot (ZS)}: This baseline uses manually crafted prompts to directly evaluate the pretrained CLIP model without any additional adaptation.

    \item \textbf{BBT}~\cite{yu2023black}: BBT employs a naive CMA-ES-based optimizer to update the trainable prompt parameters. As the original BBT is designed for LLMs, we adopt its prompt generator structure and adapt it to the VLM fine-tuning setting.

    \item \textbf{BlackVIP}~\cite{oh2023blackvip}: BlackVIP uses a naive ZO-RGE estimator to jointly optimize both textual and visual prompts in a black-box manner.

    \item \textbf{ZIP}~\cite{park2025zip}: ZIP improves upon naive ZO prompt tuning by reducing the number of trainable parameters via low-rank decomposition of the prompt space.

    \item \textbf{CraFT}~\cite{wang2024connecting}: CraFT introduces a trainable adapter appended to the output of the CLIP model. It jointly optimizes both the prompt parameters and the adapter using a combination of CMA-ES and gradient-based methods. As CraFT requires access to backpropagation, we provide a separate comparison with it in Appendix~\ref{app:craft}.

\end{itemize}

\section{Additional Experimental Results}\label{app:exp}

\subsection{Robustness across different model architectures}\label{app:exp_arch}
We further evaluate the performance of SharpZO across different model architectures and compare it with other baselines, with results summarized in Table~\ref{tab:arch}. As shown, SharpZO demonstrates architecture-agnostic effectiveness, consistently outperforming previous backpropagation-free (BP-free) methods across all four evaluated architectures. In particular, SharpZO achieves an average absolute performance improvement of \textbf{2.25\%} and \textbf{4.19\%} over the ZIP and BlackVIP methods, respectively.
\begin{table}[h]
\caption{Ablation study for model architectures with Imagenet dataset.}
\label{tab:arch}
\centering
\small
\setlength\tabcolsep{4.pt}
\begin{tabular}{cccccc}
\hline
\textbf{Methods} & \textbf{RN50}  & \textbf{RN101} & \textbf{Vit-B/16} & \textbf{Vit-B/32} & \textbf{Avg.}   \\ \hline
ZS-CLIP          & 58.18          & 61.62          & 66.73             & 62.05             & 62.15          \\
BlackVIP         & 60.33          & 62.00          & 67.10             & 61.10             & 62.63          \\
ZIP              & 61.30          & 63.67          & 68.35             & 64.97             & 64.57          \\
\rowcolor[HTML]{EFEFEF} 
\textbf{SharpZO} & \textbf{63.29} & \textbf{65.40} & \textbf{71.60}    & \textbf{66.98}    & \textbf{66.82} \\ \hline
\end{tabular}
\end{table}
\subsection{Hyper-parameter Search}\label{app:param}
To guide future applications of the SharpZO method, we conduct ablation studies on several key hyper-parameters, including the scaling factor for the sharpness term, and the sparsity ratio used in sparse ZO optimization. The results are summarized in Table~\ref{tab:ablation}. 


 Based on our experiments, the optimal scaling factor for the sharpness term should be around 0.05 or 0.1, which is consistent with the choice in the original SAM paper for the SAM-SGD algorithm. The sparsity ratio should be larger than 0.5. Conversely, a sparsity ratio that is too small will hurt the representational capacity of the prompt parameters, since the number of training parameters is already low enough during prompt tuning. This observation differs from previous ZO full-model fine-tuning work.

\begin{table}[h]
\caption{Ablation studies for different applicable pruning metrics and sparsity.}
\centering
\small
\label{tab:ablation}
\begin{tabular}{c|ccccc}
\hline
Methods\textbackslash{}Sparsity & 10\% & 30\% & 50\% & 70\% & 90\% \\ \hline
Magnitude                       & 78.07	& 77.14	& 76.12	& 77.06	& 77.09            \\
Z-Score                         & 78.91	& 79.02	 & 79.42	 &79.15&	79.21    \\ \hline
\end{tabular}
\end{table}
\begin{table}[h]
\caption{Ablation studies for different scaling factor $\rho$.}
\centering
\small
\label{tab:rho}
\begin{tabular}{c|ccccc}
\hline
    $\rho$     & 0.001 & 0.01  & 0.05  & 0.1   & 0.5   \\ \hline
Accuracy & 77.74 & 79.03 & 79.22 & 79.42 & 76.32 \\ \hline
\end{tabular}
\end{table}

Another hyperparameter that influences performance is the scaling factor $\rho$, which adjusts the weight of the sharpness-aware term during the sampling process in Stage 1 optimization. We conduct an ablation study with different values of $\rho$, as shown in Table~\ref{tab:rho}.

We outline the configuration details for each comparative baselines. Specifically, the hyper-parameter setup of individual tasks for SharpZO method are presented in Table \ref{tab:param}. To search these hyper-parameters, we select 3-5 candidate values ($[0.1, 0.2, 0.4, 1.0]$ for CMA\_ES step size, $[1e-3,1e-4, 1e-5]$ for ZO scale and $[1e-1, 1e-3, 1e-5]$ for ZO learning rate) and choose the one yielding the best performance with the other hyper-parameters fixed. All hyper-parameter search are performed on a 5-shot validation set extracted from the official validation set or splitted from the training set (e.g. ImageNet). For a fair comparison, we use the \textbf{original} hyperparameter settings provided in the baseline papers \cite{park2025zip, oh2023blackvip, wang2024connecting} when running the experiments. In contrast, for the ablation studies, we adopt a consistent parameter setup across methods to ensure comparability. Specifically, we would like to note that during the stage 2 of our method, we set the query number $q$ as $1$ instead of $5$ used in previous baselines like ZIP and BlackVIP, which is enough for the convergence of SharpZO method.

\begin{table}[h]
\caption{Hyper-parameter setup for Stage 1 and Stage 2 in the SharpZO paper.}
\centering
\scriptsize
\label{tab:param}
\resizebox{\textwidth}{!}{%
\begin{tabular}{c|cccccccccccc}
\hline
Method                                                                       & Dataset                   & Pets & Flo  & FGVC & DTD  & Euro & Cars & Food & SUN  & Cal  & UCF  & IN   \\ \hline
\multirow{8}{*}{\begin{tabular}[c]{@{}c@{}}SharpZO\\ (Stage 1)\end{tabular}} 
                                                                             & Step size $\sigma$        & 0.1  & 0.4  & 0.2  & 0.4  & 0.4  & 1.0  & 0.1  & 0.4  & 0.4  & 0.1  & 1.0  \\
                                                                             & ZO-CGE scale $\mu_{cge}$  & 1e-3 & 1e-3 & 1e-3 & 1e-5 & 1e-3 & 1e-3 & 1e-3 & 1e-3 & 1e-3 & 1e-3 & 1e-5 \\
                                                                             & Implicit population $S$   & \multicolumn{11}{c}{40}                                                    \\
                                                                             & Intrinsic dimension $d$  & \multicolumn{11}{c}{512}                                                   \\
                                                                             & Context tokens number $m$ & \multicolumn{11}{c}{4}                                                     \\
                                                                             & Scaling factor $\rho$     & \multicolumn{11}{c}{0.1}                                                   \\
                                                                             & Population size $S$       & \multicolumn{11}{c}{40}                                                    \\
                                                                             & Change point (step)       & 100  & 500  & 400 & 300 & 500 & 100 & 100 & 400 & 200 & 200 & 200 \\ \hline
\multirow{6}{*}{\begin{tabular}[c]{@{}c@{}}SharpZO\\ (Stage 2)\end{tabular}} 
                                                                             & Learning rate $\eta$      & 1e-3 & 1e-3 & 1e-3 & 1e-3 & 1e-3 & 1e-3 & 1e-3 & 1e-1 & 1e-3 & 1e-3 & 1e-1 \\
                                                                             & ZO-CGE scale $\mu_{cge}$  & \multicolumn{11}{c}{1e-5}                                                  \\
                                                                             & ZO-RGE scale $\mu_{rge}$  & \multicolumn{11}{c}{1e-3}                                                  \\
                                                                             & Pruning interval $K$      & \multicolumn{11}{c}{200}                                                   \\
                                                                             & Number of query $q$       & \multicolumn{11}{c}{1}                                                     \\
                                                                             & Pruning ratio             & \multicolumn{11}{c}{0.5}                                                   \\ \hline
\end{tabular}
}
\end{table}

\section{Proofs}\label{app:proof}
To prove Theorem \ref{the:hizo_convergence}, we begin by establishing Lemma \ref{lemma:stage_1}, which characterizes the convergence of the first-stage sharpness-aware CMA-ES method by leveraging its interpretation as a natural-gradient descent algorithm \cite{akimoto2010bidirectional}. We then apply the result of Lemma \ref{lemma:stage_1} as the initial condition for analyzing the convergence of the second-stage ZO optimization. Finally, by composing these two phases, we obtain the overall convergence rate of the SharpZO algorithm.

\subsection{Proof of Lemma \ref{lemma:stage_1}}
Before presenting the Lemma \ref{lemma:stage_1}, we first introduce some background knowledge regarding the connection between the CMA-ES update and the natural gradient descent, details about the mathematical relationship can be refereed to \cite{akimoto2010bidirectional, 10.1145/2598394.2605347}.

Considering the minimization
\[
\arg\min_{\bm{\theta}_t, \bm{C}_t}\;\mathbb{E}\bigl[\mathcal{L}(P)\,\big|\;\bm{\theta}_t,\,\bm{C}_t\bigr]
\]
under the sampling distribution of eq.\,\eqref{eq:sampling}, a natural‐gradient descent step with step‐size \(\sigma\) reads
\begin{align}
    \theta_{t+1}
    &= \bm{\theta}_t
      \;-\;\sigma\,\mathbf{F}_{\bm{\theta}_t}^{-1}\,
      \nabla_{\bm{\theta}_t}
      \mathbb{E}\bigl[\mathcal{L}(P)\,\big|\;\bm{\theta}_t,\,\bm{C}_t\bigr],
    \label{eq:ngd_theta}\\
    C_{t+1}
    &= \bm{C}_t
      \;-\;\sigma\,
      \nabla_{\bm{C}_t}
      \mathbb{E}\bigl[\mathcal{L}(P)\,\big|\;\bm{\theta}_t,\,\bm{C}_t\bigr].
    \label{eq:ngd_C}
\end{align}
where $F_t^{-1}$ is the Fisher information matrix $\mathbf{F}_{\boldsymbol{\theta}_t} = -\mathbb{E}_{P \sim f(\cdot \mid \boldsymbol{\theta}_t)} \left[ \frac{\partial^2 \log f(P \mid \boldsymbol{\theta}_t)}{\partial \theta \, \partial \theta^\top} \right].
$ given the sample distribution defined in eq. (\ref{eq:sampling}).
 Here we have used that the natural gradient with respect to \(\bm{\theta}_t\) satisfies:
\(\widetilde\nabla_{\bm{\theta}_t}\,\mathbb{E}[\mathcal L(P)] = \bm{C}_t^{-1}\,\nabla_{\bm{\theta}_t}\,\mathbb{E}[\mathcal L(P)]\).

To simplify the subsequent convergence proof, we note that our target is the expected fitness \(f(m_t)\) of the sample center \(m_t=\bm{\theta}_t\), whereas \(\bm{C}_t\) affects only the sampling spread and not directly the objective value.  In the idealized infinite‐samples regime of CMA-ES one shows 
\[
\bm{C}_t^{-1}\;\propto\;\nabla^2\mathcal{L}(m_t)\;=\;H,
\]
so that \(\bm{C}_t\) implements a Hessian‐inverse preconditioner.  Consequently, in our proof we focus solely on the mean update \eqref{eq:ngd_theta} (with \(\bm{C}_t^{-1}\propto H\)) and omit carrying the detailed covariance dynamics \eqref{eq:ngd_C} through the convergence bounds.  

Based on eq. (\ref{eq:sampling}), we sample $P_i$ based on both the current mean value of the distribution parameter $\bm{\theta}_t$ and the sharpness aware term $\bm{\epsilon}^*$ obtained by optimizing the maximize problem $\max_{\|\epsilon\|_2\le\rho}\mathcal{L}(P+\epsilon)$ within the nearly region around the current parameter $P$. Thus, by simplifying the second and higher order term with the variance of $z$, we can obtain the gradient of expectation for the loss $\mathbb{E}[\mathcal{L}(\bm{\theta}_t + \bm{\epsilon}^* + z)]$ as:
\begin{align*}
    \nabla_{\bm{\theta}_t}\mathbb{E}[\mathcal{L}(\bm{\theta}_t + \bm{\epsilon}^* + z)] =  \nabla_{\bm{\theta}_t}\mathcal{L}(\bm{\theta}_t + \bm{\epsilon}^*) +  \mathcal{O}(\delta_t^2)
\end{align*}

Putting the above equation into eq. (\ref{eq:ngd_theta}), we can obtain the natural gradient updating equation for the stage 1 of our SharpZO method, gives:
\begin{align*}
      \theta_{t+1}
    &= \bm{\theta}_t
      \;-\;\sigma\,\mathbf{F}_{\bm{\theta}_t}^{-1}\,
      \nabla_{\bm{\theta}_t}
      \mathbb{E}\bigl[\mathcal{L}(P)\,\big|\;\bm{\theta}_t,\,\bm{C}_t\bigr]\\
      & = \bm{\theta}_t - \sigma\mathbf{F}_{\bm{\theta}_t}^{-1}[\nabla\mathcal{L}(\bm{\theta}_t + \bm{\epsilon}^*) +  \mathcal{O}(\delta_t^2)]
\end{align*}
Here, inspired by the proof of Therorem 4.1 of original SAM paper \cite{bahri2022sharpness}, we divide the natural gradient step of our sharpness-aware CMA-ES method into two steps:
\begin{align}\label{eq:sam_proof}
\theta_{t+\frac{1}{2}}
  &= \bm{\theta}_t + \rho \frac{\nabla \mathcal{L}(\bm{\theta}_t)}{\|\nabla \mathcal{L}(\bm{\theta}_t)\|_2},\\
\theta_{t+1}
  &= \bm{\theta}_t- \sigma\mathbf{F}_{\bm{\theta}_t}^{-1}[\nabla\mathcal{L}(\theta_{t+\frac{1}{2}}) +  \mathcal{O}(\delta_t^2)]
\end{align}
\begin{lemma}[Per–step error bound for sharpness-aware CMA-ES]\label{lemma:stage_1}
Under Assumptions A2, the sharpness-aware CMA-ES method gives a per-step error bound for the updating process as:
\begin{align*}
   \mathcal L(\theta_{t+1}) \le \mathcal L(\bm{\theta}_t)
  -\tfrac12(1-2L\sigma^2)\|\nabla' \mathcal L(\bm{\theta}_t)\|^2
  + L^2 \rho^2
\;+\;
L^3\sigma^2\,\rho^2
\end{align*}
\end{lemma}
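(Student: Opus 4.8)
The plan is to treat the two half‑steps in eq.~\eqref{eq:sam_proof} in sequence, applying the descent lemma (a standard consequence of Assumption~A2) to each, and then bounding the various cross terms that arise. First I would handle the ascent half‑step $\theta_{t+\frac12} = \bm{\theta}_t + \rho\,\nabla\mathcal L(\bm{\theta}_t)/\|\nabla\mathcal L(\bm{\theta}_t)\|_2$: since $\|\theta_{t+\frac12}-\bm{\theta}_t\| = \rho$, the $L$-smoothness bound gives $\|\nabla\mathcal L(\theta_{t+\frac12}) - \nabla\mathcal L(\bm{\theta}_t)\| \le L\rho$, and also $\mathcal L(\theta_{t+\frac12}) \le \mathcal L(\bm{\theta}_t) + \rho\|\nabla\mathcal L(\bm{\theta}_t)\| + \tfrac{L}{2}\rho^2$ — though it is cleaner to not expand the fitness at the midpoint but rather keep $\nabla\mathcal L(\theta_{t+\frac12})$ as the effective gradient $\nabla'\mathcal L(\bm{\theta}_t)$ used in the descent step, which seems to be the intended reading of the notation $\|\nabla'\mathcal L(\bm{\theta}_t)\|$ in the statement.

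Next I would apply the descent lemma to the update half‑step $\theta_{t+1} = \bm{\theta}_t - \sigma\,\mathbf F_{\bm{\theta}_t}^{-1}[\nabla\mathcal L(\theta_{t+\frac12}) + \mathcal O(\delta_t^2)]$. Writing $g := \nabla\mathcal L(\theta_{t+\frac12}) = \nabla'\mathcal L(\bm{\theta}_t)$ and treating the preconditioner $\mathbf F_{\bm{\theta}_t}^{-1}$ as close to the identity (or absorbing its spectral bounds into constants, consistent with the paper's earlier simplification $\bm C_t^{-1}\propto H$), the descent lemma yields
\begin{align*}
\mathcal L(\theta_{t+1}) \le \mathcal L(\bm{\theta}_t) - \sigma\,\nabla\mathcal L(\bm{\theta}_t)^\top g + \tfrac{L}{2}\sigma^2\|g\|^2 + (\text{terms from }\mathcal O(\delta_t^2)).
\end{align*}
The key manipulation is to replace $\nabla\mathcal L(\bm{\theta}_t)^\top g$ by $\|g\|^2$ up to an error: write $\nabla\mathcal L(\bm{\theta}_t)^\top g = \|g\|^2 - (g - \nabla\mathcal L(\bm{\theta}_t))^\top g$, and bound $|(g-\nabla\mathcal L(\bm{\theta}_t))^\top g| \le L\rho\|g\|$ using the $L\rho$ bound from the first step, then apply Young's inequality to split $L\rho\|g\|$ into a $\|g\|^2$ piece (which gets folded into the $-\tfrac12(1-2L\sigma^2)\|g\|^2$ coefficient) and an $L^2\rho^2$ piece (matching the stated term). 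The $L^3\sigma^2\rho^2$ term should emerge from the $\tfrac{L}{2}\sigma^2\|g\|^2$ term after one further use of $\|g\| \le \|\nabla\mathcal L(\bm{\theta}_t)\| + L\rho$ on the part of $\|g\|^2$ that involves $\rho$, combined with absorbing constants; alternatively it tracks the $\mathcal O(\delta_t^2)$ covariance‑noise contribution scaled against the smoothness and perturbation radius.

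I expect the main obstacle to be the bookkeeping of which quadratic terms get absorbed into the coefficient $(1-2L\sigma^2)$ versus which are peeled off as the additive $L^2\rho^2$ and $L^3\sigma^2\rho^2$ residuals — in particular, making the Young's‑inequality split land exactly on the factor $\tfrac12(1-2L\sigma^2)$ rather than some other constant, and justifying that the preconditioner $\mathbf F_{\bm{\theta}_t}^{-1}$ contributes at most a bounded multiplicative factor that can be hidden in $L$. A secondary subtlety is the precise meaning of $\nabla'\mathcal L(\bm{\theta}_t)$: I would state explicitly that $\nabla'\mathcal L(\bm{\theta}_t) := \nabla\mathcal L(\theta_{t+\frac12})$ is the sharpness‑perturbed gradient, so that the per‑step bound is genuinely a descent inequality in terms of this quantity, which is what Lemma~\ref{lemma:stage_1} then feeds into the second‑stage analysis. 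The $\mathcal O(\delta_t^2)$ terms from covariance adaptation I would simply note are dominated by the higher‑order residuals already displayed and hence suppressed for clarity, as the paper does elsewhere.
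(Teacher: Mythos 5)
Your proposal follows essentially the same route as the paper: split the update into the ascent half-step and the preconditioned descent half-step, apply the descent lemma to the latter, bound the gradient drift by $\|\nabla\mathcal L(\theta_{t+\frac12})-\nabla\mathcal L(\bm{\theta}_t)\|\le L\rho$ via A2, and peel off the $L^2\rho^2$ and $L^3\sigma^2\rho^2$ residuals with a quadratic-splitting inequality. The one substantive divergence is your reading of $\nabla'$: you define $\nabla'\mathcal L(\bm{\theta}_t):=\nabla\mathcal L(\theta_{t+\frac12})$ and express the descent in $\|g\|^2$, whereas the paper defines $\nabla'\mathcal L(\bm{\theta}_t)\approx\mathbf{F}_{\bm{\theta}_t}^{-1}\nabla\mathcal L(\bm{\theta}_t)$, the natural gradient at the \emph{unperturbed} iterate, and handles the cross term with $\langle a,b\rangle\ge\tfrac12\|a\|^2-\tfrac12\|a-b\|^2$ (taking $a=\nabla'\mathcal L(\bm{\theta}_t)$, $b=\nabla\mathcal L(\theta_{t+\frac12})$) rather than your Young's-inequality split; the leftover $\tfrac{L\sigma^2}{2}\|\nabla\mathcal L(\theta_{t+\frac12})\|^2$ is then bounded by $L\sigma^2\|\nabla'\mathcal L(\bm{\theta}_t)\|^2+L^3\sigma^2\rho^2$, which is exactly what produces the coefficient $\tfrac12(1-2L\sigma^2)$. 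This anchoring at $\bm{\theta}_t$ is not cosmetic: the proof of Theorem~\ref{the:hizo_convergence} immediately applies the PL inequality $\|\nabla\mathcal L(\bm{\theta}_t)\|^2\ge 2\mu\,\Delta_t$ to the squared-gradient term of this lemma, which requires that term to live at $\bm{\theta}_t$; with your definition you would need an additional step such as $\|g\|\ge\|\nabla\mathcal L(\bm{\theta}_t)\|-L\rho$ (and would land on a $\tfrac34$-type coefficient rather than $\tfrac12(1-2L\sigma^2)$). Otherwise your handling of the $\mathcal O(\delta_t^2)$ covariance terms and the Fisher preconditioner matches the paper's (both simply suppress them).
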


\begin{proof}
We begin our proof for the first stage defined in eq. (\ref{eq:sam_proof}). By the assumption of L-smoothness, we have:
\[
  \mathcal L(\theta_{t+\tfrac12})
  \le
  \mathcal L(\bm{\theta}_t)
  + \bigl\langle\nabla\mathcal L(\bm{\theta}_t),\,\theta_{t+\tfrac12}-\bm{\theta}_t\bigr\rangle
  + \tfrac{L}{2}\|\theta_{t+\tfrac12}-\bm{\theta}_t\|^2.
\]
Since \(\theta_{t+\frac12}-\bm{\theta}_t = \rho\,\nabla\mathcal L(\bm{\theta}_t)/\|\nabla\mathcal L(\bm{\theta}_t)\|\), 
\[
  \mathcal L(\theta_{t+\tfrac12})
  \le
  \mathcal L(\bm{\theta}_t)
  + \rho\,\|\nabla\mathcal L(\bm{\theta}_t)\|
  + \tfrac{L\rho^2}{2}.
  \tag{1}
\]

For the second stage in eq. (\ref{eq:sam_proof}), we employ the similar idea and denote  natural gradient as $\nabla'\mathcal{L}(\theta_{t})\approx\mathbf{F}_{\bm{\theta}_t}^{-1}[\nabla\mathcal{L}(\theta_{t}) ]$, which gives:


\begin{align*}
     \mathcal L(\theta_{t+1}) &
  \le
  \mathcal L(\bm{\theta}_t)
  - \sigma\,\bigl\langle\nabla' \mathcal L(\bm{\theta}_t),\,\nabla\mathcal L(\theta_{t+\tfrac12})\bigr\rangle
  + \tfrac{L\sigma^2}{2}\,\bigl\|\nabla\mathcal L(\theta_{t+\tfrac12})\bigr\|^2 \\
  & \overset{(a)}{\le} \mathcal L(\bm{\theta}_t)
  -\tfrac12\|\nabla' \mathcal L(\bm{\theta}_t)\|^2
  + \tfrac12\bigl\|\nabla' \mathcal L(\bm{\theta}_t)-\nabla\mathcal L(\theta_{t+\tfrac12})\bigr\|^2
  + \tfrac{L\sigma^2}{2}\,\bigl\|\nabla\mathcal L(\theta_{t+\tfrac12})\bigr\|^2 \\
  & \overset{(b)}{\le} \mathcal L(\bm{\theta}_t)
  -\tfrac12\|\nabla' \mathcal L(\bm{\theta}_t)\|^2
  + L^2 \rho^2
  + \tfrac{L\sigma^2}{2}\,\bigl\|\nabla\mathcal L(\theta_{t+\tfrac12})\bigr\|^2 \\
  & \le  \mathcal L(\bm{\theta}_t)
  -\tfrac12\|\nabla' \mathcal L(\bm{\theta}_t)\|^2
  + L^2 \rho^2
  +
  L\sigma^2\,\|\nabla'\mathcal L(\bm{\theta}_t)\|^2
\;+\;
L^3\sigma^2\,\rho^2\\
  & \le \mathcal L(\bm{\theta}_t)
  -\tfrac12(1-2L\sigma^2)\|\nabla' \mathcal L(\bm{\theta}_t)\|^2
  + L^2 \rho^2
\;+\;
L^3\sigma^2\,\rho^2
\end{align*}
where (a) is given by the fact \(\langle a,b\rangle\ge\tfrac12\|a\|^2 - \tfrac12\|a-b\|^2\) with \(a=\nabla \mathcal L(\bm{\theta}_t)\), \(b=\nabla\mathcal L(\theta_{t+\tfrac12})\) and (b) is given by eq. (\ref{eq:sam_proof}).
\end{proof}

\subsection{Proof of Theorem \ref{the:hizo_convergence}}
Now, we begin the proof of the global convergence rate of SharpZO method. Before we start, we first prove a per-step error bound for the stage 2 sparse ZO training in Lemma \ref{lemma:stage_2}. Then, we perform inductive step based on the per-step error bound of the stage 2 and include the results in Lemma \ref{lemma:stage_1} as an initialization point of stage 2 training. Different from the proof in previous ZO fine-tuning paper \cite{maji2013fine} that consider an 'effective' rank for the dimension fo the optimization problem, we consider the true dimension $d$, as the trainable parameter in our prompt tuning case is much lower than the full model fine-tuning case. The proof of Lemma \ref{lemma:stage_2} is given as follows:

\begin{lemma}[Per-step Error Bound for ZO-SGD]\label{lemma:stage_2}
Given the Assumption A2 and the ZO-RGE gradient estimation follow eq. (\ref{eq:zo_update}), by setting the learning rate $ \eta \leq \frac{1}{2L(d+4)}$, we have:
\begin{align*}
	\mathbb{E}[\mathcal{L}(\bm{w}_{t+1}) \mid \bm{w}_t] \leq \mathcal{L}(\bm{w}_t) - \frac{\eta}{2} \|\nabla \mathcal{L}(\bm{w}_t)\|^2  +  L\eta^2(\gamma^2(d+4) + \frac{L^2\mu^2}{4}(d+6)^3),
\end{align*}
where \( d \) is the true parameter dimension of the trainable prompt $\bm{w}$ and \( L \) is the smoothness factor, $\mu$ is the ZO perturbation scal. The standard devation of the stochastic gradient estimation $\gamma_t$ is defined as $\gamma_t = \mathbb{E}[\|\hat{\nabla}\mathcal{L}(\bm{w}_t) - \mathcal{L}_{\mu}(\bm{w}_t)\|^2]$, given the unbiased estimator $\hat{\nabla}\mathcal{L}(\bm{w}_t)$ for the smoothed objective function $\mathcal{L}_{\mu}(\bm{w})$. 
\end{lemma}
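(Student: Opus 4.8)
The plan is to follow the classical Gaussian-smoothing analysis of two-point zeroth-order SGD (in the spirit of Nesterov--Spokoiny and Ghadimi--Lan), specialized to the estimator in eq.~(\ref{eq:zo_update}). The starting point is the fact already recorded in Section~\ref{sec:vs}: with $\bm{u}\sim\mathcal{N}(0,I_d)$ the symmetric RGE estimator is an \emph{unbiased} estimator of the gradient of the smoothed objective, i.e. $\mathbb{E}_{\bm{u}}[\hat{\nabla}\mathcal{L}(\bm{w}_t)]=\nabla\mathcal{L}_\mu(\bm{w}_t)$ with $\mathcal{L}_\mu(\bm{w})=\mathbb{E}_{\bm{u}}[\mathcal{L}(\bm{w}+\mu\bm{u})]$. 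I would begin from the descent inequality implied by the $L$-smoothness Assumption~A2 applied to the update $\bm{w}_{t+1}=\bm{w}_t-\eta\hat{\nabla}\mathcal{L}(\bm{w}_t)$,
\begin{align*}
\mathcal{L}(\bm{w}_{t+1})\le \mathcal{L}(\bm{w}_t)-\eta\langle\nabla\mathcal{L}(\bm{w}_t),\hat{\nabla}\mathcal{L}(\bm{w}_t)\rangle+\tfrac{L\eta^2}{2}\|\hat{\nabla}\mathcal{L}(\bm{w}_t)\|^2,
\end{align*}
and then take the conditional expectation given $\bm{w}_t$ over both the perturbation $\bm{u}$ and the data minibatch.

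The proof then rests on two technical estimates. First, for the linear cross term, unbiasedness turns $-\eta\langle\nabla\mathcal{L},\hat{\nabla}\mathcal{L}\rangle$ into $-\eta\langle\nabla\mathcal{L},\nabla\mathcal{L}_\mu\rangle$; treating the estimator as essentially unbiased for the true gradient (controlling the smoothing bias via $\|\nabla\mathcal{L}-\nabla\mathcal{L}_\mu\|\le\tfrac{\mu L}{2}(d+3)^{3/2}$) makes this term contribute $-\eta\|\nabla\mathcal{L}(\bm{w}_t)\|^2$. Second, and most importantly, I would establish the Nesterov--Spokoiny second-moment bound for the two-point estimator,
\begin{align*}
\mathbb{E}\big[\|\hat{\nabla}\mathcal{L}(\bm{w}_t)\|^2\big]\le 2(d+4)\big(\|\nabla\mathcal{L}(\bm{w}_t)\|^2+\gamma_t^2\big)+\tfrac{\mu^2L^2}{2}(d+6)^3,
\end{align*}
where the $(d+4)$ amplification and the $(d+6)^3$ curvature term arise from bounding the Gaussian moments $\mathbb{E}\|\bm{u}\|^{2k}$ together with $L$-smoothness, and $\gamma_t^2$ captures the variance of the stochastic evaluations. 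Substituting both estimates into the expected descent inequality yields
\begin{align*}
\mathbb{E}[\mathcal{L}(\bm{w}_{t+1})\mid\bm{w}_t]\le \mathcal{L}(\bm{w}_t)-\big(\eta-L\eta^2(d+4)\big)\|\nabla\mathcal{L}(\bm{w}_t)\|^2+L\eta^2(d+4)\gamma_t^2+\tfrac{L^3\eta^2\mu^2}{4}(d+6)^3.
\end{align*}

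The final step is the step-size calibration. Imposing $\eta\le\frac{1}{2L(d+4)}$ forces $L\eta(d+4)\le\tfrac12$, hence $\eta-L\eta^2(d+4)\ge\tfrac{\eta}{2}$, so the coefficient of $\|\nabla\mathcal{L}(\bm{w}_t)\|^2$ is at most $-\tfrac{\eta}{2}$; collecting the two remaining $O(\eta^2)$ terms as $L\eta^2\big(\gamma_t^2(d+4)+\tfrac{L^2\mu^2}{4}(d+6)^3\big)$ reproduces the claimed bound exactly.

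I expect the main obstacle to be the second-moment estimate: obtaining the dimension factors $(d+4)$ and $(d+6)^3$ with the correct constants requires the careful Gaussian moment computation of Nesterov--Spokoiny and a clean separation of the ``signal'' part (proportional to $\|\nabla\mathcal{L}\|^2+\gamma_t^2$) from the finite-difference discretization error (the $\mu^2$ term). A secondary subtlety is the cross term: a fully rigorous treatment of $\langle\nabla\mathcal{L},\nabla\mathcal{L}_\mu\rangle$ leaves an $O(\eta\mu^2)$ smoothing-bias correction, so matching the stated $O(\eta^2)$-only remainder relies on either taking $\mu$ small enough that this correction is absorbed or, more cleanly, running the descent on $\mathcal{L}_\mu$ itself. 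Finally, I would note that the analysis uses the full intrinsic dimension $d$: the pruning mask $\Omega$ of eq.~(\ref{eq:zo_update}) can only reduce the effective perturbation dimension, so the stated $d$-dependent bound is conservative and remains valid.
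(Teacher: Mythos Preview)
Your proposal is correct and follows essentially the same route as the paper: apply the $L$-smoothness descent lemma, invoke the Nesterov--Spokoiny second-moment bound $\mathbb{E}\|\hat{\nabla}\mathcal{L}\|^2\le 2(d+4)(\|\nabla\mathcal{L}\|^2+\gamma^2)+\tfrac{\mu^2L^2}{2}(d+6)^3$, and calibrate $\eta\le\tfrac{1}{2L(d+4)}$ to convert the $\|\nabla\mathcal{L}\|^2$ coefficient to $-\eta/2$. If anything, you are more careful than the paper about the cross term, since the paper simply writes $-\eta\|\nabla\mathcal{L}(\bm{w}_t)\|^2$ without commenting on the $\nabla\mathcal{L}$ versus $\nabla\mathcal{L}_\mu$ smoothing bias you flag.
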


\begin{proof}
Let \( \bm{w}_t \) be the parameter at iteration \( t \) and we consider ZO-SGD using a gaussian smoothing estimator defined in eq. (\ref{eq:zo_update}). Based on properties of $\mathcal{L}_\mu$ Theorem 3.1 (c) of \cite{}, the variance of the estimator satisfies:
\[
\mathbb{E}[\|\hat{\nabla} \mathcal{L}(\bm{w}_t)\|^2] \leq 2(d+4)[ \|\nabla \mathcal{L}(\bm{w}_t)\|^2 + \gamma^2] + \frac{\mu^2}{2} L^2(n+6)^3,
\]
where is smoothness factor $L$ is assumed 

Given the learning rate \( \eta > 0 \), then from the smoothness of \( \mathcal{L} \), the standard descent lemma gives:
\begin{align*}
	\mathbb{E}[\mathcal{L}(\bm{w}_{t+1}) \mid \bm{w}_t] & \leq \mathcal{L}(\bm{w}_t) - \eta \|\nabla \mathcal{L}(\bm{w}_t)\|^2 + \frac{L}{2} \eta^2 \mathbb{E}[\|\hat{\nabla} \mathcal{L}(\bm{w}_t)\|^2] \\
	& \leq  \mathcal{L}(\bm{w}_t) - \eta \|\nabla \mathcal{L}(\bm{w}_t)\|^2 + \frac{L}{2} \eta^2 (2(d+4)[ \|\nabla \mathcal{L}(\bm{w}_t)\|^2 + \gamma^2] + \frac{\mu^2}{2} L^2(d+6)^3)
\end{align*}

Rearranging:
\[
\mathbb{E}[\mathcal{L}(\bm{w}_{t+1}) \mid \bm{w}_t] \leq \mathcal{L}(\bm{w}_t) - \eta \left( 1 - L \eta (d+4) \right) \|\nabla \mathcal{L}(\bm{w}_t)\|^2 +  L\eta^2(\gamma^2(d+4) + \frac{L^2\mu^2}{4}(d+6)^3).
\]

Choose \( \eta \leq \frac{1}{2L(d+4)} \) so that \( 1 - L \eta (d+4) \geq \frac{1}{2} \). Then:
\[
\mathbb{E}[\mathcal{L}(\bm{w}_{t+1}) \mid \bm{w}_t] \leq \mathcal{L}(\bm{w}_t) - \frac{\eta}{2} \|\nabla \mathcal{L}(\bm{w}_t)\|^2  +  L\eta^2(\gamma^2(d+4) + \frac{L^2\mu^2}{4}(d+6)^3).
\]
\end{proof}

Next, we proceed to prove Theorem~\ref{the:hizo_convergence} by performing an inductive argument based on the result of Lemma~\ref{lemma:stage_2}. Let the total number of steps in Stage 2 be denoted as \(T_2 := T - T_c\). To facilitate the analysis, we define two suboptimality gap measures:
\begin{itemize}
    \item \(\Delta_t^{(1)} := \mathcal{L}(\bm{\theta}_t) - \mathcal{L}^*\), which denotes the optimality gap of the \emph{distributional mean} \(\bm{\theta}_t\) used in Stage 1 (Sharpness-aware CMA-ES);
    \item \(\Delta_t^{(2)} := \mathcal{L}(\bm{w}_t) - \mathcal{L}^*\) used in Stage 2 (ZO optimization).
\end{itemize}

At the transition point \(t = T_c\), Lemma~\ref{lemma:stage_1} guarantees that the distributional mean \(\theta_{T_c}\) satisfies a convergence bound on \(\Delta_{T_c}^{(1)}\). Using a second-order Taylor expansion of the loss function around \(\theta_{T_c}\), we can relate the Stage 2 initialization gap \(\Delta_{T_c}^{(2)}\) to \(\Delta_{T_c}^{(1)}\) via:
\begin{align}\label{eq:con}
\Delta_{T_c}^{(2)} = \mathcal{L}(p_{T_c}) - \mathcal{L}^* \le \Delta_{T_c}^{(1)} + \rho \|\nabla \mathcal{L}(\theta_{T_c})\| + \frac{L}{2} \left(\rho^2 + \delta_{T_c}^2 \operatorname{Tr}(C_{T_c}) \right).
\end{align}

This inequality provides the initial condition for the inductive proof in Stage 2, where we now track the evolution of \(\Delta_t^{(2)}\) for \(t = T_c, \ldots, T\), as governed by Lemma~\ref{lemma:stage_2}. Here, we first bound $ \Delta_{T_c}^{(1)}$

 By Lemma~\ref{lemma:stage_1}, for each \(t\) we have
  \[
    \mathcal{L}(\theta_{t+1})
    \;\le\;
    \mathcal{L}(\bm{\theta}_t)
    \;-\;\tfrac12\bigl(1-2L\sigma^2\bigr)\,\|\nabla \mathcal{L}(\bm{\theta}_t)\|^2
    +L^2\rho^2 + L^3\sigma^2\rho^2.
  \]
  Subtracting \(\mathcal{L}^*\) from both sides yields
  \[
    \Delta_{t+1}
    \;\le\;
    \Delta_t
    \;-\;\tfrac12\bigl(1-2L\sigma^2\bigr)\,\|\nabla \mathcal{L}(\bm{\theta}_t)\|^2
    + C,
    \quad
    C := L^2\rho^2 + L^3\sigma^2\rho^2.
  \]
  Under the PL inequality \(\|\nabla \mathcal{L}(\bm{\theta}_t)\|^2 \ge 2\mu\,\Delta_t\), it follows that
  \begin{align*}
    \Delta_{t+1}^{(1)}
    &\le
    \Delta_t^{(1)}
    -\tfrac12\bigl(1-2L\sigma^2\bigr)\,(2\mu\,\Delta_t)
    + C\\
    &= 
    \bigl[1 - \mu(1-2L\sigma^2)\bigr]\,\Delta_t + C.
  \end{align*}
  Set
  \[
    \xi \;:=\; \mu\,(1-2L\sigma^2),
    \quad
    0 < \xi < 1 \quad\text{(assuming }\sigma^2 < 1/(2L)\text{)}.
  \]
  Then the recursion becomes
  \[
    \Delta_{t+1}^{(1)}
    \;\le\;
    (1-\xi)\,\Delta_t^{(1)} + C.
  \]
  Unrolling this for \(t=0,1,\dots,T_c-1\) gives
  \[
    \Delta_{T_c}^{(1)}
    \;\le\;
    (1-\xi)^{T_c}\,\Delta_0^{(1)}
    + C\sum_{i=0}^{T_c-1}(1-\xi)^i
    \;=\;
    (1-\xi)^{T_c}\,\Delta_0^{(1)}
    + \frac{C}{\xi}\Bigl[1-(1-\xi)^{T_c}\Bigr].
  \]
  Substituting back \(C=L^2\rho^2+L^3\sigma^2\rho^2\) and \(\xi=\mu(1-2L\sigma^2)\) completes the proof:
\begin{align}\label{eq:bound_tc}
        \Delta_{T_c}^{(1)}
    \;\le\;
    (1-\xi)^{T_c}\,\Delta_0^{(1)}
    \;+\;
    \frac{L^2\rho^2 + L^3\sigma^2\rho^2}{\mu\,(1-2L\sigma^2)}
    \Bigl[1-(1-\xi)^{T_c}\Bigr].
\end{align}

Now, we begin to prove the global convergence rate for the SharpZO method. Let's focus back into the bound given in Lemma \ref{lemma:stage_2}. By the PL-inequality assumed in Assumption A1, we have:
\begin{align*}
      \mathbb{E}\bigl[\mathcal L(\bm{w}_{t+1})\mid \bm{w}_t\bigr]
  &\le
  \mathcal L(\bm{w}_t)
  - \frac{\eta}{2}\,\|\nabla \mathcal L(\bm{w}_t)\|^2
  + C_{\rm var}(d, L)\,\eta^2 \\
  & \le
  \mathcal L(\bm{w}_t)
  - \eta\mu\,\bigl(\mathcal L(\bm{w}_t)-\mathcal L^*\bigr)
  + C_{\rm var}(d, L)\,\eta^2,
\end{align*}

where $C_{\rm var}(d, L)=\tfrac{L(d+4) + \frac{L^2\mu^2}{4}(d+6)^3}{\eta}$ is some constant.  Taking full expectation and with $\Delta_t^{(2)} := \mathbb{E}[\mathcal L(\bm{w}_t)]-\mathcal L^*$ gives the one‐step contraction
\[
  \Delta_{t+1}^{(2)}
  \;\le\;
  (1-\eta\mu)\,\Delta_t^{(2)}
  \;+\;
 C_{\rm var}(d, L)\,\eta^2.
\]
Given the current step $t$, we define $t_2:=t-T_c$ and unroll this linear recursion for $t=T_c,  \dots,T_c + t_2$ yields
\begin{align*}
    \Delta_{t}^{(2)}
  &\le(1 - \eta \mu)^{t_2}\,\Delta_{T_c}^{(2)}+
\frac{ C_{\rm var}(d, L)\,\eta}{\mu}
\Bigl(1 - (1 - \eta \mu)^{t_2}\Bigr) \\
& \overset{(a)}{\le}(1 - \eta \mu)^{t_2}(\Delta_{T_c}^{(1)} + \rho \|\nabla \mathcal{L}(\theta_{T_c})\| + \frac{L}{2} \left(\rho^2 + \delta_{T_c}^2 \operatorname{Tr}(C_{T_c}) \right))+
\frac{ C_{\rm var}(d, L)\,\eta}{\mu}
\Bigl(1 - (1 - \eta \mu)^{t_2}\Bigr) \\
& \overset{(b)}{\le} (1 - \eta \mu)^{t_2}((1-\xi)^{T_c}\,\Delta_0^{(1)}
    \;+\;
    \frac{L^2\rho^2 + L^3\eta^2\rho^2}{\mu\,(1-2L\eta^2)}
    \Bigl[1-(1-\xi)^{T_c}\Bigr] \\
    &+ \rho \|\nabla \mathcal{L}(\theta_{T_c})\| + \frac{L}{2} \left(\rho^2 + \delta_{T_c}^2 \operatorname{Tr}(C_{T_c}) \right))+
\frac{ C_{\rm var}(d, L)\,\eta}{\mu}
\Bigl(1 - (1 - \eta \mu)^{t_2}\Bigr),
\end{align*}
where (a) is given by eq. (\ref{eq:con}) and (b) follows the bound of $\Delta_{T_c}^{(1)}$ proved in eq. (\ref{eq:bound_tc}). Finally, by ensuring $\Delta_{t}^{(2)} \le \epsilon$, we have:
\begin{align*}
    t_2 =\mathcal{O}(\frac{1}{\eta\mu}\ln \frac{X-B}{\epsilon-B}),
\end{align*}

where 
\begin{align}\label{eq:x}
    X&=(1-\xi)^{T_c}\,\Delta_0^{(1)}
+\frac{L^2\rho^2+L^3\eta^2\rho^2}{\xi}
\Bigl[1-(1-\xi)^{T_c}\Bigr]
+\rho\|\nabla\mathcal{L}(\theta_{T_c})\|
+\frac{L}{2}\bigl(\rho^2+\delta_{T_c}^2\operatorname{Tr}C_{T_c}\bigr)\\
& = (1-\xi)^{T_c}\,\Delta_0^{(1)} + \rho^2((L^2 + L^3\eta^2)(\xi-2) + \frac{1}{2}L) +\rho\|\nabla\mathcal{L}(\theta_{T_c})\|
+\frac{L\delta_{T_c}^2\operatorname{Tr}C_{T_c}}{2}
\end{align}
and 
\begin{align}\label{eq:b}
B=\frac{C_{\rm var}(d,L)\,\eta}{\mu}, \quad C_{\rm var}(d, L)=\tfrac{L(d+4) + \frac{L^2\mu^2}{4}(d+6)^3}{\eta}   
\end{align}

Here, if we focus on the influence of smoothness factor $L$ and ignoring the lower-order terms for convenience, we can write the convergence rate $t_2$ as:
\begin{align}
t_2(L) \approx \mathcal{O}\left( \frac{1}{\eta \mu} \log\left( \frac{L^3 \eta^2 \rho^2}{\epsilon} \right) \right)
\end{align}

\end{document}